\documentclass[journal]{IEEEtai}

\usepackage[colorlinks,urlcolor=blue,linkcolor=blue,citecolor=blue]{hyperref}

\usepackage{color,array}
\usepackage{booktabs}
\usepackage{amsmath}
\usepackage{amsthm}
\usepackage{amssymb}
\usepackage{comment}

\usepackage{graphicx}


\newtheorem{theorem}{Theorem}

\newtheorem{corollary}[theorem]{Corollary}
\newtheorem{proposition}[theorem]{Proposition}

\begin{document}

\title{Generalized Holographic Reduced Representations} 

\author{%
  Calvin Yeung,
  Zhuowen Zou,
  SungHeon Jeong,
  Wenjun Huang,
  Nathaniel D. Bastian,
  and Mohsen Imani%
  \thanks{C. Yeung, Z. Zou, S. Jeong, and M. Imani are with the University of California, Irvine, CA 92697 USA (e-mail: chyeung2@uci.edu; zhuowez1@uci.edu; sungheoj@uci.edu; m.imani@uci.edu).}%
  \thanks{N. D. Bastian is with the United States Military Academy, West Point, NY 10996 USA (e-mail: nathaniel.bastian@westpoint.edu).}%
}

\markboth{Journal of IEEE Transactions on Artificial Intelligence, Vol. 00, No. 0, Month 2020}
{Yeung \MakeLowercase{\textit{et al.}}: Generalized Holographic Reduced Representations}

\maketitle

\begin{abstract}

Hyperdimensional Computing (HDC) is a computationally and data-efficient paradigm that acts as a bridge between connectionist and symbolic approaches to artificial intelligence (AI). However, HDC's simplicity poses challenges for encoding complex compositional structures, especially in its binding operation. To address this, we propose Generalized Holographic Reduced Representations (GHRR), an extension of Fourier Holographic Reduced Representations (FHRR), a specific HDC implementation. GHRR introduces a flexible, non-commutative binding operation, enabling improved encoding of complex data structures while preserving HDC's desirable properties of robustness and transparency. In this work, we introduce the GHRR framework, prove its theoretical properties and its adherence to HDC properties, explore its kernel and binding characteristics, and perform empirical experiments showcasing its flexible non-commutativity, enhanced decoding accuracy for compositional structures. We also demonstrate that binding in GHRR is more expressive than that in other HDC variants; in particular, we show that binding in GHRR can implement a kind of attention mechanism. We verify this by replacing the attention mechanism in a transformer with its GHRR-equivalent and testing it on a language modeling task, showing improved performance compared to a vanilla transformer.

\end{abstract}

\begin{IEEEImpStatement}
This work introduces Generalized Holographic Reduced Representations (GHRR), a novel extension of hyperdimensional computing that bridges symbolic and connectionist AI by providing flexible, non-commutative binding operations. GHRR preserves the robustness, interpretability, and fixed-width nature of traditional Fourier Holographic Reduced Representations while enabling accurate encoding and decoding of complex compositional structures. We show that GHRR binding can implement attention mechanisms, and by replacing the transformer's standard attention with a GHRR-based equivalent, achieve improved language modeling performance. These advances lower computational and data requirements for structured representation learning, promote transparent neuro-symbolic integration, and open new pathways for efficient, interpretable AI systems.
\end{IEEEImpStatement}

\begin{IEEEkeywords}
Neurosymbolic Artificial Intelligence, Cognitive Systems, Explainable AI, Edge AI
\end{IEEEkeywords}

\section{Introduction}
In the past decade of artificial intelligence research, deep learning has seen monumental success, finding applications in domains such as classification, image generation, and language modeling \cite{krizhevskyImageNetClassificationDeep2012, goodfellowGenerativeAdversarialNets2014, hoDenoisingDiffusionProbabilistic2020, vaswaniAttentionAllYou2017a, brownLanguageModelsAre2020}. Central to the success of deep learning is its ability to learn representations that preserve task-relevant structure in the data, a necessary component for models with generalization capabilities. Indeed, this reliance on the quality of representations is reflected in the common use of pre-trained models, and, more recently, in foundation models \cite{Bommasani2021FoundationModels}. 

However, underpinning deep learning models with good representations are tremendous computational resources and internet-scale data in the order of terabytes and beyond. While research into scaling laws suggests that it is possible to further improve the model and thus representation quality by continuing to scale up model size, data quantity, and computational resources \cite{kaplanScalingLawsNeural2020}, this approach is extremely cost-inefficient, time-consuming, and excludes all but the largest and most resourceful organizations from the development of such models. Thus, it is necessary to explore directions that are \textit{compute- and data-efficient} while having representational properties amenable to generalization. 

In recent years, Hyperdimensional Computing (HDC), or Vector Symbolic Architectures (VSA), has emerged as a neurosymbolic computational paradigm in artificial intelligence bridging the gap between connectionist and symbolic frameworks \cite{kanervaHyperdimensionalComputingIntroduction2009,kleykoSurveyHyperdimensionalComputing2023,renkhoff2024survey,vergesClassificationUsingHyperdimensional2025}. In this way, HDC can act as an interface that enables the explicit specification of structure as in symbolic approaches while maintaining the flexibility and power of connectionist, especially deep learning, approaches. HDC forms an algebra over higher-dimensional vectors, called hypervectors, with operations corresponding to symbolic operations. For example, the bundling operation ($+$) corresponds to disjunction and binding ($*$) corresponds to conjunction of symbols. Through the use of such operations, it is possible to build up complex representations and data structures \cite{kleykoAutoscalingBloomFilter2020,kleykoVectorSymbolicArchitectures2022,poduval2022graphd}, and consequently instill a factorized structure onto the space of representations, which can subsequently be used in downstream tasks such as symbolic reasoning \cite{hersche2023neuro,herscheProbabilisticAbductionVisual2023}. 

Recent research has highlighted the significant benefits of HDC in improving the computational efficiency and symbolic reasoning capabilities of neural networks~\cite{menet2024mimonets,hersche2023neuro,liuHyperDynDynamicDimensional2025}.

While HDC shows promise as a neuro-vector-symbolic framework for factorized and compositional representations~\cite{rachkovskijShiftEquivariantSimilarityPreservingHypervector2024}, its simplicity poses a problem for its expressive power and its ability to encode more complex structures. In particular, crucial to the HDC framework is the binding operation, which enables the formation of representational complexes from simpler parts, often interpreted as an association or conjunction of concepts. This, however, is limiting, as concepts may hold various kinds of relationships with each other and may be combined in multiple ways. In addition, implementations of binding in HDC are typically commutative, so additional techniques such as applying permutations or positional encodings are often needed to encode structure, which complicates the representation of data structures. Thus, we need a more flexible and non-commutative binding operation more suitable for encoding complex structures, while still maintaining the other desirable properties of HDC, such as transparency, robustness to noise, interpretability, and fixed-width composition.

The design of our proposed framework stems from recognizing (1) the importance of high-dimensional, holographic, fixed-width representations in maintaining desirable properties of HDC; (2) the lack of expressivity of the binding operation in current HDC implementations; and (3) its close connection to effective and efficient HDC learning~\cite{imani2020dual, hernandez2021onlinehd}, namely in leveraging the Random Fourier Features (RFF) encoding~\cite{rahimiRandomFeaturesLargeScale} to map data to hypervectors. For the second point, elements of HDC hypervectors are typically in a subgroup of $(\mathbb{C},*)$, which limits algebraic complexity when applying the binding operation. For the third point, the effectiveness of HDC in learning tasks results largely from encoding inspired by RFF, enabling the efficient approximation of shift-invariant kernels via taking the similarity between hypervectors. To be precise, Fractional Power Encoding (FPE) \cite{fradyComputingFunctionsUsing2022, plate1992holographic, plate1994distributed} with binding-based composition gives rise to the RFF encoding.

With the three identified directions in consideration, we introduce Generalized Holographic Reduced Representations (GHRR), an extension of Fourier Holographic Reduced Representations (FHRR) \cite{plateHolographicReducedRepresentations1995,plateHolographicReducedRepresentations2000}, a particular implementation of HDC. While elements in FHRR hypervectors are in the unitary group $\mathrm U(1)\subseteq \mathbb{C}$, they are, in contrast, in $\mathrm U(m)$ in GHRR. While maintaining the kernel properties of FHRR \cite{rahimiRandomFeaturesLargeScale} and satisfying the basic constraints of HDC representations, GHRR provides a framework for flexible, non-commutative binding as well as adaptive kernels while enabling improved encoding of complex data structures via better decoding abilities and memorization of bound hypervectors. Our work has several key contributions:

\begin{enumerate}
    \item We introduce the GHRR framework and outline a particular implementation of GHRR, including variations of increasing complexity (Section \ref{sec:ghrr-overview}). 
    \item We prove that our implementation of GHRR satisfies the basic properties of HDC, including quasi-orthogonality and the similarity preservation of the binding operation (Section \ref{subsec:ghrr-prop}-\ref{subsec:ghrr-quasi}).
    \item We explore the kernel and binding properties of GHRR and provide an interpretation of binding in GHRR as an interpolation between binding in FHRR and in Tensor Product Representations \cite{smolenskyTensorProductVariable1990} (Section \ref{sec:enc-data}).
    \item We demonstrate that binding in GHRR is more expressive than that in other HDC variants (Section \ref{subsec:binding}); in particular, we show that binding in GHRR can implement a kind of attention mechanism (Section \ref{subsec:binding-attention}).
    \item We perform empirical experiments on GHRR, demonstrating its flexible non-commutativity, increased decoding accuracy for compositional structures, e.g. trees and improved memorization capacity for bound hypervectors compared to FHRR (Section \ref{sec:non-commutativty}-\ref{subsec:capacity}). 
    \item We replace the attention mechanism in a transformer with a GHRR binding-based attention mechanism and test it on a language modeling task, showing improved performance over a vanilla transformer (Section \ref{subsec:token}).
\end{enumerate}

\section{Background}
\subsection{Hyperdimensional Computing}

Hyperdimensional Computing (HDC) is a computational framework for representing and handling data. HDC uses high-dimensional vectors where each element contributes to the overall concept representation~\cite{kanervaHyperdimensionalComputingIntroduction2009}. HDC representations are robust to noise and partial information loss by using high-dimensional and holographic vectors: by distributing the information ``evenly" (holographic) and redundantly (``high-dimensional"), the loss of some dimensions does not significantly degrade the information content~\cite{kanerva1996binary, kanervaHyperdimensionalComputingIntroduction2009}. Concretely, HDC achieves this property by using randomly generated hypervectors that give rise to consistent behavior at the algorithm level, using the property of the high-dimensional space. 

HDC addresses symbolic reasoning by including operations on high-dimensional vectors that allow for manipulations similar to logic operations in symbolic AI~\cite{plate2003holographic}. In particular, HDC achieves disjunctive and conjunctive operations through bundling and binding operations, respectively. A similarity function (e.g. cosine similarity) between hypervectors is used to facilitate pattern recognition and memory recall.

The fundamental unit in HDC is a high dimensional vector, also called a hypervector. A hypervector $\mathbf{H}$ lives in some hyperspace $\mathcal{H}$, e.g., $\mathbb{R}^D$ for $D$ large. The collection of hypervectors, along with some operators, forms an algebra over vectors. Generally, there are two types of hypervectors: (1) base hypervectors, which are generated stochastically, e.g., $\mathbf{H}\sim N(0,I_D)$; and (2) composite hypervectors, which are created by combining hypervectors via the operators of the algebra. These hypervectors can be compared via a similarity function $\delta(\mathbf{H}_1,\mathbf{H}_2)$. Generally, base hypervectors are generated such that they are quasi-orthogonal with respect to the similarity function. The three main operations in HDC, bundling, binding, and permutation, can be characterized by how they affect the similarity of hypervectors. We describe the three operations below:
\begin{enumerate}
    \item Bundling ($+$): Typically implemented as element-wise addition. If $\mathbf{H}=\mathbf{H}_1+\mathbf{H}_2$, then both $\mathbf{H}_1$ and $\mathbf{H}_2$ are similar to $\mathbf{H}$. This corresponds to forming a disjunctive representation of the corresponding concepts.
    \item Binding ($*$): Typically implemented as element-wise multiplication. If $\mathbf{H}=\mathbf{H}_1*\mathbf{H}_2$, then $\mathbf{H}$ is dissimilar to both $\mathbf{H}_1$ and $\mathbf{H}_2$. Binding also has the important property of similarity preservation in the sense that for some hypervector $\mathbf{H}_3$, $\delta(\mathbf{H}_3*\mathbf{H}_1,\mathbf{H}_3*\mathbf{H}_2)\simeq\delta(\mathbf{H}_1,\mathbf{H}_2)$. This corresponds to forming a conjunctive representation of the corresponding concepts.
    \item Permutation ($\rho$): Typically implemented as a rotation of vector elements. Generally, $\delta(\rho(\mathbf{H}),\mathbf{H})\simeq 0$. Permutation is usually used to encode order in sequences.
\end{enumerate}

These operators can be used to build up complex compositional representations for structured data such as trees \cite{fradyResonatorNetworksEfficient2020} and graphs \cite{poduval2022graphd}, which can subsequently be used for downstream tasks \cite{zakeri2025enabling}. For example, these representations can be operated on in a symbolic manner (e.g. via ``unbinding'' to query the data structure), or can also be integrated into neural networks to form neuro-vector-symbolic architectures \cite{hersche2023neuro}.

It is important to note that the description above of HDC is general; there are various specific realizations of HDC with the above properties. For our work, we focus on Fourier Holographic Reduced Representation, which we expand on in the next section.

\subsubsection{HDC in neural networks}

Recent research has integrated HDC with deep neural networks (DNNs) to address two primary limitations of traditional connectionist models: high computational costs during inference and the lack of structured symbolic reasoning.

\paragraph{Efficiency and Superposition}
To address computational inefficiency, approaches like MIMONets~\cite{menet2024mimonets} utilize the HDC property of superposition to process multiple inputs concurrently. MIMONets generalize standard layers (fully connected, convolutional, and transformer blocks) by binding inputs with high-dimensional keys to project them into quasi-orthogonal subspaces. This allows a single pass of the neural network to process a bundled representation of multiple inputs. 

Empirical evaluations on standard benchmarks, such as CIFAR-10/100 and the Long Range Arena (LRA), demonstrate that this ``computation in superposition'' yields a $2\times$--$4\times$ speedup with minimal accuracy loss. This approach is particularly valuable for edge computing and embedded platforms where memory bandwidth and latency are critical bottlenecks.

\paragraph{The Binding Problem}
Standard neural networks struggle to represent and decompose joint representations of distinct entities and their attributes. This limitation is widely recognized in literature as the variable binding problem~\cite{greffBindingProblemArtificial2020, Feldman2013TheNB}. To address this problem, HDC offers a mathematically rigorous binding operation to distinctively represent object-attribute compositions.
Recent neuro-vector-symbolic architectures~\cite{hersche2023neuro} mitigate this issue by training encoder networks to map perceptual inputs (images) into vector-symbolic representations. In these architectures, objects are formed by binding attributes, and scenes are formed by bundling objects. This method has shown significant improvements in visual abstract reasoning tasks, such as Raven's Progressive Matrices (RAVEN)~\cite{johnRavenProgressiveMatrices2003}, outperforming state-of-the-art pure DNNs and neuro-symbolic baselines in both accuracy and data efficiency. Other works have similarly extended Transformers with HD representations to enhance their structural processing capabilities~\cite{bettayeb2024adapting}.

\paragraph{Hardware and System Considerations}
Beyond algorithmic improvements, the integration of HDC into NNs is often driven by hardware considerations. HDC operations are highly parallelizable and robust to noise, making them ideal candidates for emerging hardware accelerators such as FPGAs and Processing-in-Memory (PIM) architectures~\cite{salamatF5HDFastFlexible2019,karunaratneInmemoryHyperdimensionalComputing2020,geClassificationUsingHyperdimensional2020}. In these hybrid systems, feature extraction is performed by the NN, while the symbolic manipulation and reasoning are offloaded to efficient HDC operations, thereby reducing the energy cost of symbolic processing and enabling lower-precision computation without sacrificing robustness.

\subsubsection{HDC in cognitive modeling}
HDC has been employed to model a variety of cognitive capabilities such as sequence memorization and problem-solving. These models are applied to cognitive tasks including the Wason selection task \cite{eliasmithCognitionNeuronsLargescale}, n-back task \cite{gosmannSpikingNeuralModel2015}, and Raven's Progressive Matrices \cite{rasmussenNeuralModelRule2011}. For example, \cite{murdockTheoryStorageRetrieval1982} shows that using HDC as a sequence recall model can produce results that closely mimics human performance. In addition, HDC has been used as a component of cognitive architectures such as Semantic Pointer Architecture Unified Network (SPAUN) \cite{eliasmithLargeScaleModelFunctioning2012}.

\subsection{Fourier Holographic Reduced Representations}

Fourier Holographic Reduced Representations (FHRR) is a specific implementation of HDC. 
While other implementations, such as Multiply Add Permute (MAP) \cite{gayler1998multiplicative} or Holographic Reduced Representations (HRR) \cite{plate1995holographic}, rely on bipolar or real-valued vectors respectively, FHRR operates in the complex domain using phasors. FHRR naturally maintains unit magnitude during binding operations, avoiding the noise accumulation and normalization issues often prevalent in real-valued architectures. Crucially, FHRR provides a rigorous theoretical connection to kernel methods via Random Fourier Features \cite{rahimi2007random}, allowing us to interpret hyperdimensional similarity as a kernel approximation.

An FHRR hypervector is of the form $\mathbf{H}=[e^{i\theta_1},...,e^{i\theta_D}]$. Bundling ($+$) and binding ($*$) are the usual element-wise addition and multiplication, respectively. In addition, the similarity is defined as $\delta(\mathbf{H}_1,\mathbf{H}_2):=\mathrm{Re}[\mathbf{H}_1^\dagger \mathbf{H}_2]/D$. Here, $\dagger$ denotes the conjugate transpose and $\mathrm{Re}(z)$ is the real component of $z\in\mathbb{C}$.


Fractional power encoding (FPE) has been used for encoding data points to hypervectors. It is a locality-preserving encoding for points on a data manifold where the similarity, or inner product, of the hypervectors reflects the relationship between the points~\cite{plate1994distributed, fradyComputingFunctionsUsing2022}. For the $k$-th feature of data of dimension $n$, attached is an FHRR base hypervector of the form $\mathbf{H}_k = e^{i\theta}$, where $\theta$ is a column vector such that $\theta_j \sim p_k$ for some fixed distribution $p_k$. The data point is then encoded as the binding of the base hypervector exponentiated by the value of the corresponding feature: $\phi(\mathbf{x})=\mathbf{H}_1^{\mathbf{x}_1}*\mathbf{H}_2^{\mathbf{x}_2}*...*\mathbf{H}_n^{\mathbf{x}_n}$. Here, the exponentiation $\mathbf{H}_j^{\mathbf{x}_j}$ refers to exponentiating each element of $\mathbf{H}_j$ by $\mathbf{x}_j$. This allows the data to be smoothly expressed and manipulated, as representations generated this way are correlated according to some shift-invariant kernel on the corresponding feature values, which we will explain below.

There is a remarkable connection between FPE of FHRR hypervectors and kernel-based methods, making it a powerful tool for learning in HDC. In particular, it coincides with the Random Fourier Features (RFF) \cite{rahimiRandomFeaturesLargeScale} encoding, an efficient approximation of kernel methods. The RFF encoding is a map $\phi:\mathbb{R}^n\to\mathbb{C}^D$, with $\phi(\mathbf{x})=e^{i\mathbf{Mx}}$, where each row $\mathbf{M}_{j,:}\sim p$ for some multivariate distribution $p$. The columns $\mathbf{M}_{:,k}$ can then be viewed as the exponents of the base hypervector $\mathbf{H}_k$, and $p = [p_1, ..., p_n]$. As a result of Bochner's theorem, $\langle\phi(\mathbf{x}),\phi(\mathbf{y})\rangle/D\approx K(\mathbf{x}-\mathbf{y})$, where $K$ is a shift-invariant kernel that is the Fourier transform of distribution $p$ \cite{rahimiRandomFeaturesLargeScale} (See Supplementary Materials A). Notably, when $p$ is the standard Gaussian distribution, the radial basis function (RBF) kernel is recovered. Many HDC learning models achieve good time and accuracy performance by the efficient kernel approximation coupled with some HDC algorithm.

\section{Overview of GHRR}\label{sec:ghrr-overview}

We introduce Generalized Holographic Reduced Representations (GHRR), a framework extending Fourier Holographic Reduced Representations (FHRR). While standard FHRR operates on scalar components in the unitary group $U(1)$, GHRR generalizes base elements to unitary matrices in $U(m)$ for $m \ge 1$.

This generalization addresses two fundamental limitations where scalar-based HDC methods fail:
\begin{itemize}
    \item \textbf{Non-Commutativity:} Standard binding (element-wise scalar multiplication) is commutative, failing to encode asymmetric structures like sequences or trees without permutations. GHRR binding is inherently non-commutative, enabling the natural representation of ordered structures.
    \item \textbf{Expressive Binding:} Standard binding acts as a low-capacity holographic projection. It fails to retain sufficient information for complex interactions such as attention. GHRR binding (block-wise matrix multiplication) offers higher expressivity, interpolating between parsimonious projection and full tensor products.
\end{itemize} 

GHRR hypervectors are tensors of the form
\begin{align}
    \mathbf{H}=[\mathbf{A}^{(1)},...,\mathbf{A}^{(D)}]^\top\in\mathbb{C}^{D\times m\times m},
\end{align}
where $\mathbf{A}^{(j)}\in\mathrm{U}(m)$ such that $\mathbf{A}^{(j)}\sim p$ for some distribution $p$ over $\mathrm{U}(m)$, for $j=1,...,D$.

The standard operations in HDC extend naturally. Let $\mathbf{H}_1=[\mathbf{A}^{(j)}]_{j=1}^D$ and $\mathbf{H}_2=[\mathbf{B}^{(j)}]_{j=1}^D\in\mathbb{C}^{D\times m\times m}$. We define bundling as element-wise addition, i.e.
\begin{align}
    \mathbf{H}_1+\mathbf{H}_2 = [\mathbf{A}^{(j)}+\mathbf{B}^{(j)}]_{j=1}^D,
\end{align}
and binding to be element-wise matrix multiplication, i.e.
\begin{align}
    \mathbf{H}_1*\mathbf{H}_2 = [\mathbf{A}^{(j)}\mathbf{B}^{(j)}]_{j=1}^D.
\end{align}
where $\mathbf{A}^{(j)}\mathbf{B}^{(j)}$ is the matrix-product between $\mathbf{A}^{(j)}$ and $\mathbf{B}^{(j)}$. \textit{For notational consistency, $*$ will always refer to element-wise matrix multiplication unless specified otherwise.} When binding multiple hypervectors, we use the notation $\mathbf{H}_1*\dots * \mathbf{H}_n=\prod_{k=1}^n\mathbf{H}_k$.

We define the similarity between two hypervectors as
\begin{align}\label{eq:sim}
    \delta(\mathbf{H}_1,\mathbf{H}_2)=\frac{1}{mD}\mathrm{Re}\,\text{tr}\left(\sum_{j=1}^D \mathbf{A}^{(j)}(\mathbf{B}^{(j)})^\dagger\right).
\end{align}
It can be seen that for $m=1$, this similarity is exactly that for FHRR. Note that after bundling, the matrix entries need not be unitary. However, we still require that base hypervectors be unitary such that (1) they are similar to themselves under the similarity defined in Eq.~\ref{eq:sim}, and (2) the base hypervectors and their fractional power are norm-preserving with respect to binding~\cite{fradyComputingFunctionsUsing2022}. Figure \ref{fig:ghrr-overview} provides an overview of and comparison between GHRR and FHRR.

\begin{figure}
    \centering
    \includegraphics[width=\columnwidth]{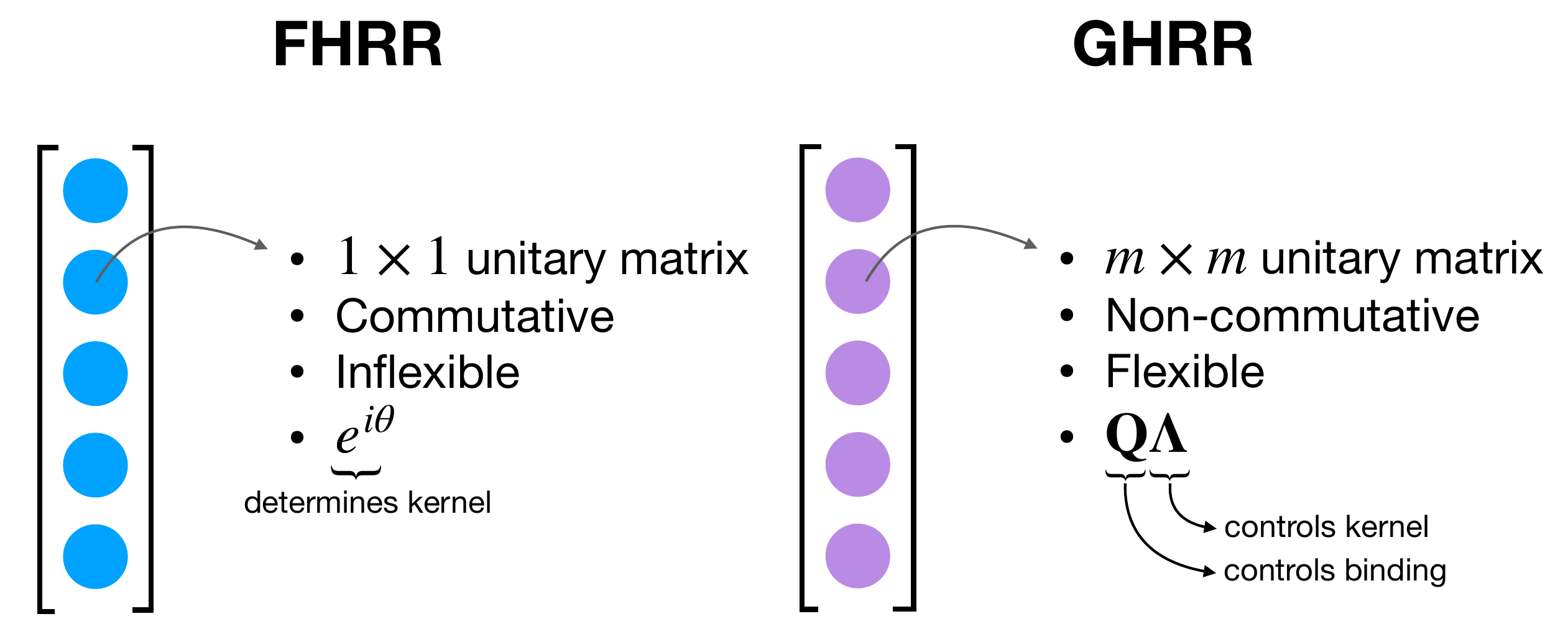}
    \caption{Comparison between FHRR and GHRR.}
    \label{fig:ghrr-overview}
\end{figure}

\section{An Implementation of GHRR}\label{sec:ghrr-implementation}
What we have described in Section \ref{sec:ghrr-overview} are general characteristics of GHRR. Specifying an implementation entails specifying (1) the form of the unitary matrix components; and, relatedly, (2) how they are sampled. 

\subsection{Desired Properties}\label{subsec:ghrr-prop}

We want our GHRR base hypervectors to fulfill the basic properties of HD representations. Most notably, we need the binding operation for representing the association of features works as intended~\cite{gayler2004vector}. For independently sampled  base hypervectors $\mathbf{H}_1,\mathbf{H}_2,\mathbf{H}_3$,
\begin{enumerate}
    \item $\delta(\mathbf{H}_1,\mathbf{H}_2)\to 0$ as $D\to\infty$; and
    \item $\delta(\mathbf{H}_1*\mathbf{H}_2,\mathbf{H}_1)\to 0$ as $D\to\infty$.
    \item $\delta(\mathbf{H}_1,\mathbf{H}_2)\approx\delta(\mathbf{H}_3*\mathbf{H}_1,\mathbf{H}_3*\mathbf{H}_2)\approx\delta(\mathbf{H}_1*\mathbf{H}_3,\mathbf{H}_2*\mathbf{H}_3)$.
\end{enumerate}
While property 3 follows naturally from the cyclic property of the trace, the other two properties do not hold in arbitrary implementations. In addition, since GHRR is an extension of FHRR, we want the two to be equivalent when $m=1$.

Property 1 specifies that hypervectors should be quasi-orthogonal. This is motivated by the fact that base hypervectors are typically used to represent discrete abstract symbols. As such symbols are purely formal and lack semantic content, their corresponding representations should be uncorrelated. Property 2 ensures that the result of the binding operation is quasi-orthogonal to its operands. This is crucial for distinguishing a composite structure from its constituent parts; it guarantees that the representation of an association (e.g., a key-value pair) is distinct from the representations of the individual items, thereby preventing ambiguity and interference during the retrieval process. Finally, Property 3 ensures that the similarity between two concepts is invariant under binding, guaranteeing that relational structures are maintained even when concepts are embedded into more complex representations.

\subsection{Quasi-orthogonal Representations}\label{subsec:ghrr-quasi}
We derive sufficient conditions for these properties to hold, and based on that, develop a constrained parameterization for GHRR representations.

\begin{proposition}\label{prop:orthogonal}
Let $\mathbf{A}=\mathbf{Q}_{1}\mathbf{\Lambda}_{1}$ and $\mathbf{B}=\mathbf{Q}_{2}\mathbf{\Lambda}_{2} \in \mathbb{C}^{m\times m}$ be random matrices where $\mathbf{Q}_{j}$ and $\mathbf{\Lambda}_{j}$ are sampled independently of each other for $j=1,2.$ Moreover, let $\mathbf{\Lambda}_{1} = \operatorname{diag}(\lambda_{1},...,\lambda_{m})$ and $\mathbf{\Lambda}_{2} = \operatorname{diag}(\eta_{1},...,\eta_{m})$ with $\lambda_{j},\eta_{j} \in U(1)$ for $j=1,...,m$. If $\mathbb{E}[\eta_{j}\lambda_{j}^{*}]=0$ for all $j=1,...,m,$ then $\mathbb{E}[\operatorname{tr}(\mathbf{A}\mathbf{B}^{\dagger})]=0.$
\end{proposition}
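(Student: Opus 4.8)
The plan is to compute $\mathbf{A}\mathbf{B}^\dagger$ directly, exploit the diagonal structure of $\mathbf{\Lambda}_1,\mathbf{\Lambda}_2$ to separate the ``phase'' randomness from the ``rotation'' randomness, and then push the expectation inside the trace so that the hypothesis $\mathbb{E}[\eta_j\lambda_j^*]=0$ kills every surviving term. Since the trace equals the sum of the diagonal entries and the whole object is linear in the matrix entries, the argument should reduce to index bookkeeping plus a single application of independence.

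First I would write $\mathbf{A}\mathbf{B}^\dagger=\mathbf{Q}_1\mathbf{\Lambda}_1\mathbf{\Lambda}_2^\dagger\mathbf{Q}_2^\dagger$ and pass to entries. Because right-multiplication by a diagonal matrix scales columns, $A_{ab}=(Q_1)_{ab}\lambda_b$ and $B_{ab}=(Q_2)_{ab}\eta_b$. Using the identity $\mathrm{tr}(\mathbf{A}\mathbf{B}^\dagger)=\sum_{a,b}A_{ab}\overline{B_{ab}}$ (the Frobenius inner product), this gives
\begin{equation}
\mathrm{tr}(\mathbf{A}\mathbf{B}^\dagger)=\sum_{a,b}(Q_1)_{ab}\,\overline{(Q_2)_{ab}}\;\lambda_b\,\eta_b^*.
\end{equation}
The key structural point is that the phase factor attached to each term depends only on the column index $b$ and is exactly $\lambda_b\eta_b^*$, completely disentangled from the rotation factor $(Q_1)_{ab}\overline{(Q_2)_{ab}}$.

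Next I would take the expectation and move it inside the finite sum. The main obstacle — and really the only place the hypothesis does any work — is justifying that the phase factor $\lambda_b\eta_b^*$ decouples from the rotation factor in expectation. This is where I would invoke the sampling assumption: since each $\mathbf{Q}_j$ is drawn independently of its $\mathbf{\Lambda}_j$ (and, as the construction intends, the phase block is jointly independent of the rotation block), the pair $(\lambda_b,\eta_b)$ is independent of $\big((Q_1)_{ab},(Q_2)_{ab}\big)$, so the expectation factors as $\mathbb{E}\!\big[(Q_1)_{ab}\overline{(Q_2)_{ab}}\big]\,\mathbb{E}[\lambda_b\eta_b^*]$. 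I would then observe $\lambda_b\eta_b^*=\overline{\eta_b\lambda_b^*}$, whence $\mathbb{E}[\lambda_b\eta_b^*]=\overline{\mathbb{E}[\eta_b\lambda_b^*]}=0$ by hypothesis. Every term in the sum therefore vanishes, giving $\mathbb{E}\,\mathrm{tr}(\mathbf{A}\mathbf{B}^\dagger)=0$ and completing the argument. The only subtlety worth stating carefully is the precise independence being used, since the conclusion needs the phases to be jointly independent of the rotations rather than merely the within-pair independence literally written in the statement.
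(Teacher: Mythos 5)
Your proof is correct and is essentially the paper's argument: the paper likewise disentangles the phases from the unitaries---using cyclicity of the trace to collapse to a single $\mathbf{Q}=\mathbf{Q}_2^\dagger\mathbf{Q}_1$ and the diagonal sum $\sum_{k}\lambda_k\eta_k^*\mathbf{Q}_{kk}$, where your Frobenius-inner-product expansion keeps the double sum over entries---and then factors the expectation by independence so that $\mathbb{E}[\eta_k\lambda_k^*]=0$ kills every term. The subtlety you flag at the end is genuine and applies equally to the paper's own proof: its step $\mathbb{E}_{\lambda,\eta|\mathbf{Q}}[\lambda_k\eta_k^*]=\mathbb{E}_{\lambda,\eta}[\lambda_k\eta_k^*]$ silently assumes the phases are jointly independent of the rotations, which is exactly the strengthening of the stated within-pair independence that you identify as necessary (and without which the claim fails, e.g.\ for $m=1$, $q_1=\eta=1$, $\lambda$ uniform on $\mathrm{U}(1)$, $q_2=\lambda$).
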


\begin{proof}
To establish quasi-orthogonality (Property 1), we must show that the expected similarity between two independently generated hypervectors is zero. This similarity is defined by the trace of their product $\mathbf{A}\mathbf{B}^\dagger$.

First, we substitute the definitions of $\mathbf{A}$ and $\mathbf{B}$:
\begin{equation}
\operatorname{tr}(\mathbf{A}\mathbf{B}^{\dagger}) = \operatorname{tr}(\mathbf{Q}_{1}\mathbf{\Lambda}_{1}\mathbf{\Lambda}_{2}^{\dagger}\mathbf{Q}_{2}^{\dagger})
\end{equation}

By utilizing the cyclic property of the trace ($\operatorname{tr}(\mathbf{X}\mathbf{Y}\mathbf{Z}) = \operatorname{tr}(\mathbf{Z}\mathbf{X}\mathbf{Y})$), we can rearrange the terms to group the unitary matrices $\mathbf{Q}$ together:
\begin{equation}
\operatorname{tr}(\mathbf{A}\mathbf{B}^{\dagger}) = \operatorname{tr}((\mathbf{Q}_{2}^{\dagger}\mathbf{Q}_{1})\mathbf{\Lambda}_{1}\mathbf{\Lambda}_{2}^{\dagger})
\end{equation}

Let $\mathbf{Q} = \mathbf{Q}_{2}^{\dagger}\mathbf{Q}_{1}$. The matrix $\mathbf{Q}$ represents the relative rotation between the two hypervectors' bases. Since $\mathbf{\Lambda}_1$ and $\mathbf{\Lambda}_2^\dagger$ are diagonal matrices, their product forms a diagonal matrix with entries $\lambda_k \eta_k^*$. The trace of the product of a general matrix $\mathbf{Q}$ and a diagonal matrix involves only the diagonal elements of $\mathbf{Q}$. Thus, the expression simplifies to a weighted sum:
\begin{equation}
\operatorname{tr}(\mathbf{A}\mathbf{B}^{\dagger}) = \sum_{k=1}^{m} Q_{kk} \lambda_{k}\eta_{k}^{*}
\end{equation}

Finally, we take the expectation of this sum. Because the phase factors $\lambda_k$ and $\eta_k$ are sampled independently from the rotation matrix $\mathbf{Q}$ and satisfy the zero-mean condition $\mathbb{E}[\lambda_{k}\eta_{k}^{*}] = 0$, the expectation distributes over the sum and vanishes:
\begin{align}
\mathbb{E}[\operatorname{tr}(\mathbf{A}\mathbf{B}^{\dagger})] &= \sum_{k=1}^{m} \mathbb{E}[Q_{kk}] \mathbb{E}[\lambda_{k}\eta_{k}^{*}]\label{eq:tr-exp-val}
= \sum_{k=1}^{m} \mathbb{E}[Q_{kk}] \cdot 0 = 0
\end{align}
\end{proof}

\begin{corollary}\label{cor:orthogonal}
    Suppose we sample random matrices $\mathbf{A},\mathbf{B}$ with the form $\mathbf{Q\boldsymbol{\Lambda}}$, where $\mathbf{Q}\in\mathbb{C}^{m\times m}$ is a randomly sampled unitary matrix and, independently, $\mathbf{\boldsymbol{\Lambda}}=\mathrm{diag}(\boldsymbol{\Lambda}_1,...,\boldsymbol{\Lambda}_m)$, where $\boldsymbol{\Lambda}_j\sim p_j$ where $p_j$ is a symmetric distribution with zero mean for $j=1,...,m$. Then $\mathbb{E}\,\mathrm{tr}(\mathbf{AB}^\dagger)=0$.
\end{corollary}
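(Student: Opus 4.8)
The plan is to obtain this as an immediate consequence of Proposition~\ref{prop:orthogonal}; the only thing requiring verification is that the stated sampling scheme meets that proposition's single hypothesis, namely $\mathbb{E}[\eta_j\lambda_j^*]=0$ for every $j$. Writing $\mathbf{A}=\mathbf{Q}_1\mathbf{\Lambda}_1$ and $\mathbf{B}=\mathbf{Q}_2\mathbf{\Lambda}_2$ with $\mathbf{\Lambda}_1=\mathrm{diag}(\lambda_1,\dots,\lambda_m)$ and $\mathbf{\Lambda}_2=\mathrm{diag}(\eta_1,\dots,\eta_m)$, I would first observe that because $\mathbf{A}$ and $\mathbf{B}$ are sampled independently, the diagonal entries $\lambda_j$ and $\eta_j$ are independent, so the expectation factors as $\mathbb{E}[\eta_j\lambda_j^*]=\mathbb{E}[\eta_j]\,\mathbb{E}[\lambda_j^*]$.

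Next I would invoke the assumptions on $p_j$. Symmetry of $p_j$ about the origin means $\lambda_j$ and $-\lambda_j$ are equidistributed, which forces $\mathbb{E}[\lambda_j]=-\mathbb{E}[\lambda_j]$ and hence $\mathbb{E}[\lambda_j]=0$; consequently $\mathbb{E}[\lambda_j^*]=\overline{\mathbb{E}[\lambda_j]}=0$, and likewise $\mathbb{E}[\eta_j]=0$. Since either factor vanishing already makes the product of expectations zero, we get $\mathbb{E}[\eta_j\lambda_j^*]=0$ for all $j$, so the hypothesis of Proposition~\ref{prop:orthogonal} is satisfied and its conclusion $\mathbb{E}\,\mathrm{tr}(\mathbf{AB}^\dagger)=0$ follows directly.

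There is no serious obstacle here; the content is essentially a matching-of-hypotheses argument, and the substantive work was already done in the proposition. The only points worth stating with care are (i) that the within-matrix independence of $\mathbf{Q}_j$ from $\mathbf{\Lambda}_j$ together with the between-matrix independence of $\mathbf{A}$ and $\mathbf{B}$ is exactly the independence structure Proposition~\ref{prop:orthogonal} presumes, so the factoring of expectations is legitimate; and (ii) that symmetry is in fact slightly stronger than what is used, since zero mean of $p_j$ alone already gives $\mathbb{E}[\lambda_j^*]=0$, meaning the symmetry hypothesis could be weakened if one wished. I would also note in passing that, as in the proposition's own proof, the argument never actually requires $\lambda_j,\eta_j\in\mathrm{U}(1)$, so the corollary applies to general zero-mean symmetric diagonal factors.
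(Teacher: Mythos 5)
Your proposal is correct and matches the paper's treatment: the paper states Corollary~\ref{cor:orthogonal} without proof, as an immediate consequence of Proposition~\ref{prop:orthogonal}, and the intended verification is exactly yours --- independence of $\mathbf{A}$ and $\mathbf{B}$ factors $\mathbb{E}[\eta_j\lambda_j^*]$ into $\mathbb{E}[\eta_j]\,\mathbb{E}[\lambda_j^*]$, each of which vanishes by the zero-mean (or symmetry) assumption. Your side remarks, that zero mean alone suffices and that unitarity of the diagonal entries is never used, are also accurate.
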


\begin{corollary}\label{cor:binding}
    Suppose we sample random matrices $\mathbf{A},\mathbf{B}$ according to Corollary \ref{cor:orthogonal}. Then $\mathbb{E}\,\mathrm{tr}(\mathbf{A}(\mathbf{AB})^\dagger)=0$.
\end{corollary}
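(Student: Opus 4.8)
The plan is to use the cyclic invariance of the trace together with the unitarity of $\mathbf{A}$ to collapse $\mathbf{A}(\mathbf{AB})^\dagger$ down to a single factor $\mathbf{B}^\dagger$, and then to invoke the same independence-and-zero-mean argument that drives Proposition \ref{prop:orthogonal}.

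First I would expand the adjoint of the binding: since $(\mathbf{AB})^\dagger = \mathbf{B}^\dagger\mathbf{A}^\dagger$, we get $\mathbf{A}(\mathbf{AB})^\dagger = \mathbf{A}\mathbf{B}^\dagger\mathbf{A}^\dagger$. Then, using the cyclic property of the trace, $\mathrm{tr}(\mathbf{A}\mathbf{B}^\dagger\mathbf{A}^\dagger) = \mathrm{tr}(\mathbf{A}^\dagger\mathbf{A}\mathbf{B}^\dagger)$. The key observation is that $\mathbf{A} = \mathbf{Q}\mathbf{\Lambda}$ is itself unitary: $\mathbf{Q}$ is unitary by assumption and $\mathbf{\Lambda}$ is a diagonal matrix with entries in $\mathrm{U}(1)$, so their product satisfies $\mathbf{A}^\dagger\mathbf{A} = \mathbf{I}$. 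This reduces the quantity of interest to $\mathrm{tr}(\mathbf{B}^\dagger)$, removing all dependence on $\mathbf{A}$.

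Finally I would take the expectation of $\mathrm{tr}(\mathbf{B}^\dagger)$. Writing $\mathbf{B} = \mathbf{Q}_B\mathbf{\Lambda}_B$ with $\mathbf{\Lambda}_B = \mathrm{diag}(\eta_1,\dots,\eta_m)$, the $k$-th diagonal entry of $\mathbf{B}^\dagger$ is $\eta_k^*(\mathbf{Q}_B^\dagger)_{kk}$, so by independence of $\mathbf{\Lambda}_B$ and $\mathbf{Q}_B$ we obtain $\mathbb{E}\,\mathrm{tr}(\mathbf{B}^\dagger) = \sum_{k=1}^m \mathbb{E}[\eta_k^*]\,\mathbb{E}[(\mathbf{Q}_B^\dagger)_{kk}]$; since each $\eta_k$ has a symmetric, zero-mean distribution we have $\mathbb{E}[\eta_k^*] = 0$, and the sum vanishes, exactly as in the final line of the proof of Proposition \ref{prop:orthogonal}.

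The only thing to be careful about is the apparent statistical coupling between $\mathbf{AB}$ and the leading factor $\mathbf{A}$: the two share the random matrix $\mathbf{A}$, so one cannot naively treat $\mathbf{AB}$ as independent of $\mathbf{A}$ and cite Corollary \ref{cor:orthogonal} directly. The point of the unitarity step is precisely that this coupling is harmless---$\mathbf{A}^\dagger\mathbf{A}$ collapses to the identity, so no joint expectation over $(\mathbf{A},\mathbf{B})$ is ever required. An equivalent and even shorter route, should the unitarity identity be deemed less transparent, is to condition on $\mathbf{A}$ and observe that $\mathbb{E}_{\mathbf{B}}[\mathbf{B}^\dagger] = \mathbb{E}[\mathbf{\Lambda}_B^\dagger]\,\mathbb{E}[\mathbf{Q}_B^\dagger] = \mathbf{0}$, whence $\mathbb{E}\,\mathrm{tr}(\mathbf{A}\mathbf{B}^\dagger\mathbf{A}^\dagger) = \mathbb{E}_{\mathbf{A}}\,\mathrm{tr}(\mathbf{A}\cdot\mathbf{0}\cdot\mathbf{A}^\dagger) = 0$.
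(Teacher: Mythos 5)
Your proof is correct and follows essentially the same route as the paper's: expand $(\mathbf{AB})^\dagger$, use the cyclic property of the trace together with the unitarity of $\mathbf{A}$ to collapse $\mathrm{tr}(\mathbf{A}\mathbf{B}^\dagger\mathbf{A}^\dagger)$ to $\mathrm{tr}(\mathbf{B}^\dagger)$, then conclude from the zero mean of the diagonal entries of $\mathbf{\Lambda}_B$ and their independence from $\mathbf{Q}_B$. The only difference is cosmetic: the paper dispatches the final expectation by citing Proposition \ref{prop:orthogonal} (with the identity playing the role of $\mathbf{A}$), whereas you compute it directly and also spell out the coupling concern that the paper leaves implicit.
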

\begin{proof}
    Observe that
    \begin{align}
        \mathbb{E}\,\mathrm{tr}(\mathbf{A}(\mathbf{AB})^\dagger) &= \mathbb{E}\,\mathrm{tr}(\mathbf{A}\mathbf{B}^\dagger\mathbf{A}^\dagger)
        = \mathbb{E}\,\mathrm{tr}(\mathbf{I}\mathbf{B}^\dagger).
    \end{align}
    Since $\mathbf{B}=\mathbf{Q\boldsymbol{\Lambda}}$ where $\mathbf{\boldsymbol{\Lambda}}=\mathrm{diag}(\boldsymbol{\Lambda}_1,...,\boldsymbol{\Lambda}_m)$, with $\boldsymbol{\Lambda}_j$ sampled from distributions with zero mean, $\mathbb{E}\,\boldsymbol{\Lambda}_j=0$ for all $j=1,...,m$. So Proposition \ref{prop:orthogonal} applies and we are done.
\end{proof}

{

Quasi-orthogonality is also observed empirically. See Section \ref{sec:enc-data} and Figure \ref{fig:fixed-vs-vary} for details. 
}

Corollary \ref{cor:orthogonal} gives us a way to construct base hypervectors satisfying the desiderata mentioned above. By default, Corollary \ref{cor:orthogonal} gives us desired property 1, while Corollary \ref{cor:binding} gives us desired property 2. Moreover, it is evident that when $m=1$, this GHRR implementation is equivalent to FHRR. Thus, a GHRR base hypervector is of the form
\begin{align}[\mathbf{Q}^{(1)}\mathbf{\boldsymbol{\Lambda}}^{(1)},...,\mathbf{Q}^{(D)}\mathbf{\boldsymbol{\Lambda}}^{(D)}]^\top,
\end{align}
where $\mathbf{Q}^{(j)}\in \mathrm{U}(m)$ and $\mathbf{\boldsymbol{\Lambda}}^{(j)}=\mathrm{diag}(e^{i\theta_1},...,e^{i\theta_m})$, with $\theta_k\sim p_k$ such that $\mathbb{E}[e^{i\theta_k}]=0$. Figure \ref{fig:quasi-orthogonal} shows a histogram of the similarity between randomly sampled hypervectors following the scheme described in Corollary \ref{cor:orthogonal}, where $D=1000$, $m=3$, and $p_j=\mathrm{Unif}(0,2\pi)$. We observe that the randomly sampled hypervectors are quasi-orthogonal; i.e. their similarities are concentrated about zero. The top histogram holds $\mathbf{Q}^{(j)}=\mathbf{Q}^{(k)}$ for all $j,k=1,...,D$, while the middle histogram has randomly sampled $\mathbf{Q}^{(j)}$. The bottom histogram shows that the similarity between $H_1$ and $H_1*H_2$ respects the result in Corollary \ref{cor:binding}.

\begin{figure}
    \centering
    \includegraphics[width=\columnwidth]{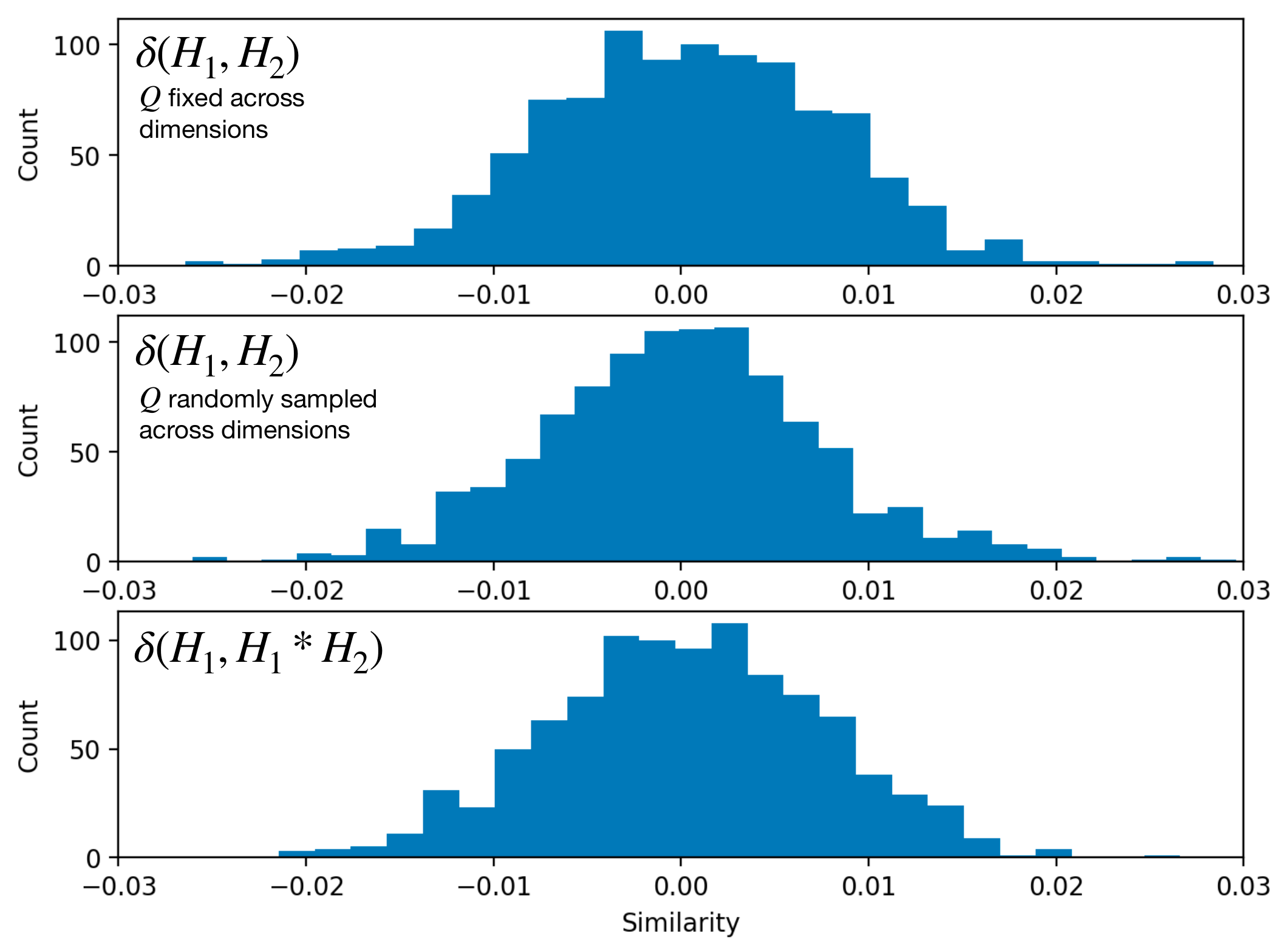}
    \caption{A histogram of the similarity between randomly sampled hypervectors following the scheme described in Corollary \ref{cor:orthogonal}, where $D=1000$, $m=3$, and $p_j=\mathrm{Unif}(0,2\pi)$. Top: Histogram where $\mathbf{Q}_j=\mathbf{Q}_k$ holds for all $j,k=1,...,D$. Middle: histogram where $\mathbf{Q}_j$ for $j=1,...,D$ are randomly sampled. Bottom: histogram of similarity between $H_1$ and $H_1*H_2$.}
    \label{fig:quasi-orthogonal}
\end{figure}

\subsection{Encoding Data}\label{sec:enc-data}

As mentioned in the introduction, RFF enables the efficient approximation of shift-invariant kernels via taking the similarity between FHRR hypervectors. To be precise, Fractional Power Encoding (FPE)~\cite{fradyComputingFunctionsUsing2022, plate1992holographic, plate1994distributed} with binding-based composition gives rise to the RFF encoding.

FPE encodes the scalar value $x \in \mathbb{R}$ of a variable $\alpha$ by exponentiating a base FHRR hypervector $\mathbf{H}_\alpha\in\mathbb{C}^{D}$ representing the variable $x$:
\begin{align}
    \phi_\alpha(x)=\mathbf{H}_\alpha^x=\left[e^{i \theta^{(j)} x}\right]_{j=1}^D, \theta^{(j)} \sim p_\alpha,
\end{align}
where $p_\alpha$ is some probability distribution. We can represent multidimensional data through binding-based composition. Namely, for an $n$-dimensional vector $\mathbf{x} \in \mathbb{R}^n$, we encode each component via FPE as above and bind all of the results together: 
\begin{align}
    \phi(\mathbf{x})=\prod_{k=1}^n \mathbf{H}_k^{\mathbf{x}_k}=
    \prod_{k=1}^n \left[e^{i \theta^{(j)}_k \mathbf{x}_k}\right]_{j=1}^D = 
    \left[e^{i (\theta^{(j)})^{\top} \mathbf{x}}\right]_{j=1}^D,
\end{align}
where $\theta^{(j)}_k \sim p_k$ and $\theta^{(j)}=[\theta^{(j)}_1,...,\theta^{(j)}_n]$. Here, the product $\prod$ denotes the binding operation for FHRR, i.e., element-wise multiplication.

In this section, we provide a viable extension of the encoding for GHRR. Let $\mathcal{H}$ be the space of GHRR representations. Suppose we have data from some input space $\mathcal{X}$. We are interested in a smooth map $\phi:\mathcal{X}\to\mathcal{H}$. As per the previous section, we factor $\phi$ into two components and write $\phi(\mathbf{x})_j=\mathbf{Q}^{(j)}(\mathbf{x})\mathbf{\boldsymbol{\Lambda}}^{(j)}(\mathbf{x})$ for $j=1,...,D$, where $\mathbf{Q}^{(j)}:\mathcal{X}\to \mathrm{U}(m)$ and $\mathbf{\boldsymbol{\Lambda}}^{(j)}:\mathcal{X}\to \{\mathrm{diag}(\boldsymbol{\Lambda}_{j1},...,\boldsymbol{\Lambda}_{jm})\,|\,\boldsymbol{\Lambda}_{jk}\in\mathbb{C},|\boldsymbol{\Lambda}_{jk}|=1\}$. In particular, for $\mathbf{\boldsymbol{\Lambda}}^{(j)}$, we choose diagonal entries of the form $\boldsymbol{\Lambda}_j(\mathbf{x})=e^{i\mathbf{w}_{jk}^\top \mathbf{x}}$, where $\mathbf{w}_{jk}\sim p_k$ with $p_k$ being symmetric distributions with zero mean. We can interpret this component of $\phi$ as the part that controls the basic shape of the kernel as in FHRR.

For $\mathbf{Q}$, there are two orthogonal choices to make: whether or not $\mathbf{Q}$ should depend on the input $\mathbf{x}$; and whether or not there should be variance over $j$. These two options give us four classes of GHRR encodings. We consider the case where $\mathbf{Q}$ does not depend on $\mathbf{x}$. Thus, we have the encoding
\begin{align}
    \phi(\mathbf{x})_j=\mathbf{Q}^{(j)}\mathbf{\boldsymbol{\Lambda}}^{(j)}(\mathbf{x}).
\end{align}

\subsubsection{Kernel Properties of GHRR}

Suppose we have two encodings $\phi_1,\phi_2$, where $\phi_1^{(j)}(\mathbf{x})=\mathbf{Q}_{1}^{(j)}\mathbf{\boldsymbol{\Lambda}}^{(j)}(x)$ and $\phi_2^{(j)}(\mathbf{x})_j=\mathbf{Q}_2^{(j)}\mathbf{\boldsymbol{\Lambda}}^{(j)}(x)$ where $\mathbf{Q}_{1}^{(j)}\sim q_1$ and $\mathbf{Q}_{2}^{(j)}\sim q_2$ for $j=1,...,D$. Let $\mathbf{\boldsymbol{\Lambda}}^{(j)}(\mathbf{x})=\mathrm{diag}(e^{i\mathbf{w}_{j1}^\top\mathbf{x}},...,e^{i\mathbf{w}_{jm}^\top\mathbf{x}})$, where $\mathbf{w}_{jk}\sim p_k$. Moreover, let $\mathbf{Q}^{(j)}=(\mathbf{Q}_2^{(j)})^\dagger\mathbf{Q}_1^{(j)}\sim q$ where $q$ is a distribution induced by $q_1$ and $q_2$. Then we have
\begin{align}
    \delta(\phi_1(\mathbf{x}),\phi_2(\mathbf{y})) &\approx\frac{1}{m}\mathbb{E}\,\mathrm{tr}(\mathbf{Q}^{(j)}\mathbf{\boldsymbol{\Lambda}}^{(j)}(\mathbf{x}) \mathbf{\boldsymbol{\Lambda}}^{(j)}(\mathbf{y})^\dagger)\\
    &= \frac{1}{m}\mathrm{Re}\left(\sum_{k=1}^m \mathbb{E}_{q}[\mathbf{Q}^{(j)}_{kk}]\mathbb{E}_{p_k}[e^{i\mathbf{w}_{jk}^\top (\mathbf{x}-\mathbf{y})}]\right)
\end{align}
where $\mathbf{w}_{jk}\sim p_k$ for $k=1,...,m$. The last step assumes that $\mathbf{Q}^{(j)}_{kk}$ and $\mathbf{w}_{jk}$ are generated independently. This allows us to seperate the expression into a product of two parts: one that depends on the data in a shift-invariant manner, and one that is independent of the data.

We would like to express this expression as a sum of kernels. We can write $\mathbf{Q}^{(j)}_{kk}=r_ke^{i\theta_k}$, where $(r_1,...,r_m,\theta_1,...,\theta_m)\sim g$, with $g$ being some distribution induced by $q$. In addition, let $\Theta_k$ be $g$ marginalized over all variables except for $\theta_k$ and $R_k:=g(\cdot|\theta_k)$ be a conditional distribution marginalized over all variables except for $r_k$ and $\theta_k$. Then
\begin{align}
    \delta(\phi_1(\mathbf{x}),\phi_2(\mathbf{y}))
    &\approx \frac{1}{m}\sum_{k=1}^m \mathbb{E}_{R_k}[r_k]\mathrm{Re}(\mathbb{E}_{p_k,\Theta_k}[e^{i\mathbf{w}_{jk}^\top (\mathbf{x}-\mathbf{y})+i\theta_k}])\\
    &=\frac{1}{m}\sum_{k=1}^m \mathbb{E}_{R_k}[r_k]\mathrm{Re}(\mathbb{E}_{\Tilde{p}_k}[e^{i\Tilde{\mathbf{w}}_{jk}^\top (\Tilde{\mathbf{x}}-\Tilde{\mathbf{y}})}])\\
    &=\frac{1}{m}\sum_{k=1}^m \mathbb{E}_{R}[r_k]\Tilde{K}_k(\Tilde{\mathbf{x}}-\Tilde{\mathbf{y}})\label{eq:weighted-kernel}
\end{align}
Here, $\Tilde{\mathbf{w}}_{jk}=[\mathbf{w}_{jk},\theta_k]\sim \Tilde{p}_k=[p_k,\Theta_k]$, $\Tilde{\mathbf{x}}=[\mathbf{x},1]$, $\Tilde{\mathbf{y}}=[\mathbf{y},0]$, and $\Tilde{K}_k$ is the kernel corresponding to $\Tilde{p}_k$.
Thus the kernel corresponding to the similarity between encodings with different $\mathbf{Q}$ but the same $\mathbf{\boldsymbol{\Lambda}}(\mathbf{x})$ is a weighted sum of the kernels, where each kernel is influenced by both the diagonal elements of $\mathbf{\boldsymbol{\Lambda}}(\mathbf{x})$ as well as the distribution over $\mathbf{Q}$. The weights are solely determined the distribution over $\mathbf{Q}$. As a special case, consider $\phi_1=\phi_2$. Then $\mathbf{Q}_{kk}=1$, resulting in the kernel $\frac{1}{m}\sum_{k=1}^m K_k(\mathbf{x}-\mathbf{y})$, where $K_k$ is the kernel corresponding to $p_k$. This realization gives us a way to interpret an encoding $\phi$ where $\mathbf{Q}$ depends on $\mathbf{x}$ as an adaptive kernel on the input. 

\begin{table*}[t]

    \centering
    \caption{Summary of Main GHRR Variants and Qualitative Effects}
    \label{tab:ghrr_variants}
    \renewcommand{\arraystretch}{1.5}
    \begin{tabular}{p{0.25\linewidth} p{0.3\linewidth} p{0.35\linewidth}}
        \hline
        \textbf{Variant} & \textbf{Structure of $\mathbf{Q}$} & \textbf{Key Properties} \\
        \hline
        \textbf{Fixed $\mathbf{Q}$} & Fixed across dimensions; Independent of input & Simplest implementation retaining non-commutative binding; exhibits quasi-orthogonality with higher variance. \\
        \hline
        \textbf{Varying $\mathbf{Q}$} & Varies across dimensions (sampled i.i.d.); Independent of input & Exhibits quasi-orthogonality with lower variance.\\
        \hline
        \textbf{Input-dependent $\mathbf{Q}$} & Dependent on input ($\mathbf{Q}(x)$); Fixed or Varying across dimensions & Enables adaptive kernel shapes where feature importance is modulated by context; functions as a re-weighting of kernels, allowing for attention-like mechanisms and higher expressivity. \\
        \hline
    \end{tabular}
    
\end{table*}

Figure \ref{fig:fixed-vs-vary} demonstrates the difference between having a fixed versus a varying $\mathbf{Q}$ across dimensions, where the top and bottom histogram visualizes the distribution of $\delta(\phi_1(0),\phi_2(0))$ where $\mathbf{Q}$ is fixed across dimensions and varied across dimensions, respectively. Specifically, let $\mathbf{Q}=[\mathbf{Q}_{j}]_{j=1}^D\in\mathbb{C}^{D\times m\times m}$. If $\mathbf{Q}$ is fixed, $\mathbf{Q}_j$ is a randomly sampled unitary matrix and we set $\mathbf{Q}_i=\mathbf{Q}_j$ for all $i,j$. When $\mathbf{Q}$ is varying, each $\mathbf{Q}_j$ is sampled i.i.d. for all $j$. 

We observe that by varying $\mathbf{Q}$, the distribution is significantly more centered about the mean. When the mean is close to zero, this suggests that using randomly sampled encodings with varying $\mathbf{Q}$ can minimize cross-talk interference by default. On the other hand, in an encoding scheme where $\mathbf{Q}$ is learnable, one can more easily optimize $\mathbf{Q}$ to exhibit desired behavior, as there are fewer parameters to optimize.

\begin{figure}
    \centering
    \includegraphics[width=\columnwidth]{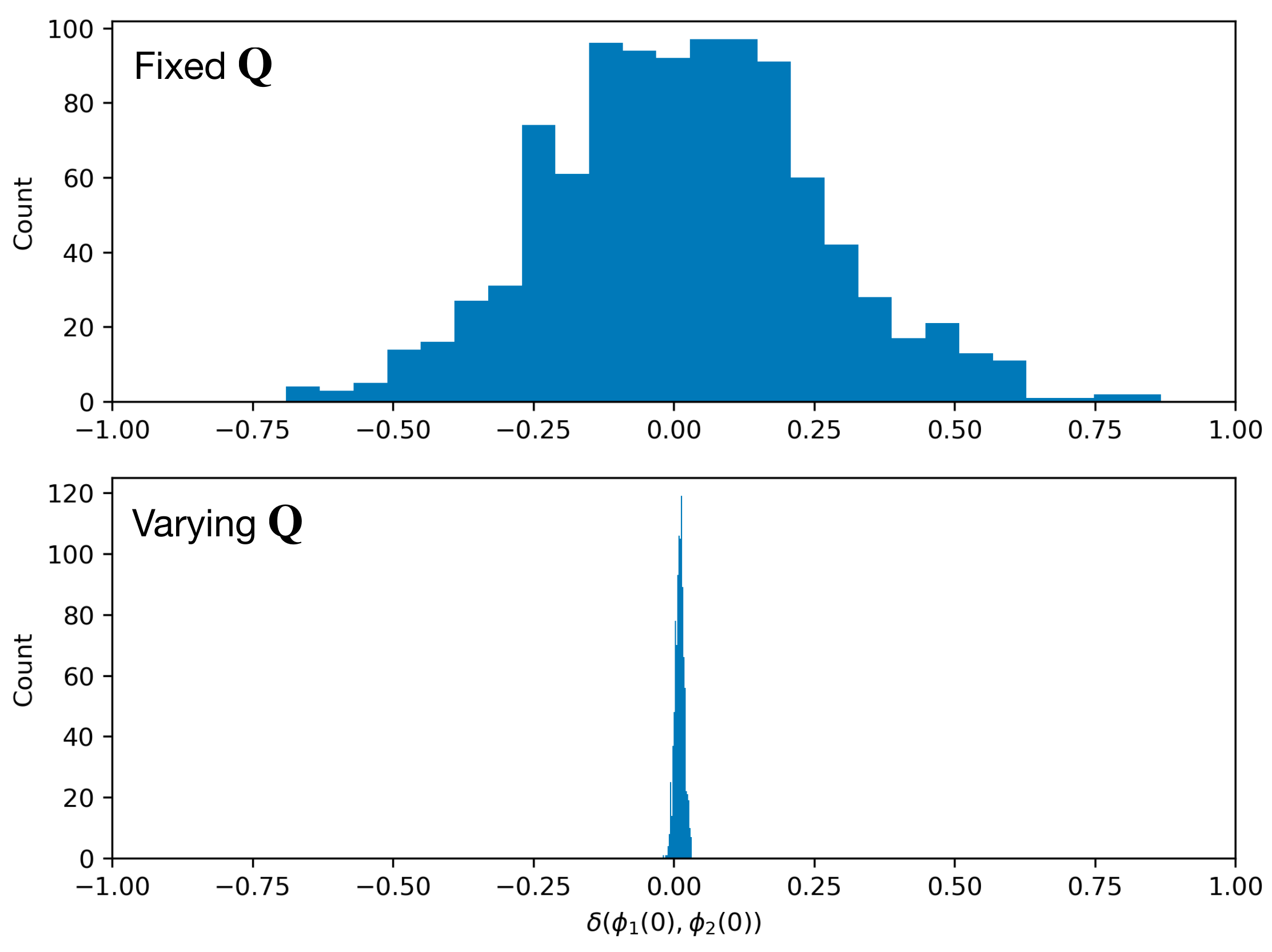}
    \caption{Distribution of $\delta(\phi_1(0),\phi_2(0))$ over encodings $\phi_1,\phi_2$. Top: $\mathbf{Q}$ is fixed across dimensions for $\phi_1,\phi_2$. Bottom: $\mathbf{Q}$ is varied across dimensions for $\phi_1,\phi_2$.}
    \label{fig:fixed-vs-vary}
\end{figure}

\subsection{Binding as holographic projections}\label{subsec:binding}

While kernel properties between two encodings can be described simply, the power of GHRR comes from its ability to perform binding while retaining more information regarding its constituents. 
Binding two hypervectors in FHRR can be viewed as taking the tensor product of the two hypervectors and then taking the diagonal; i.e. a holographic projection. In the case of GHRR, we can view binding in GHRR as an extension of binding in FHRR. In particular, let the effective dimension of a GHRR encoding be $Dm$ and suppose it is fixed. By modulating $m$, we can interpolate between binding as a parsimonious holographic projection as in the case of FHRR and binding as an equivalent of taking the full tensor product. Figure \ref{fig:ghrr-binding} illustrates the intuition behind binding in FHRR and GHRR as described above. The fully colored square represents the full tensor product representation resulting from binding two hypervectors together; binding in FHRR/GHRR ``collapses'' this representation by taking the diagonal/block-diagonal respectively.

To illustrate this point, consider two hypervectors $\mathbf{G}_1=[\mathbf{Q}^{(j)}\mathbf{\boldsymbol{\Lambda}}^{(j)}]_{j=1}^D$ and $\mathbf{G}_2=[\mathbf{R}^{(j)}\mathbf{H}^{(j)}]_{j=1}^D$. Let us focus on the $j$-th matrix-element and let $\mathbf{\boldsymbol{\Lambda}}^{(j)}=\mathrm{diag}(\boldsymbol{\Lambda}_{j1},...,\boldsymbol{\Lambda}_{jm})$ and $\mathbf{H}^{(j)}=\mathrm{diag}(\eta_{j1},...,\eta_{jm})$. Moreover, denote $\mathbf{Q}^{(j)}_{kl}=q_{kl}$ and $\mathbf{R}^{(j)}_{kl}=r_{kl}$. Then the corresponding element of $\mathbf{G}_1*\mathbf{G}_2$ is
\begin{align}\label{eq:elem-bind}
    [\mathbf{G}_1*\mathbf{G}_2]^{(j)} &= \mathbf{Q}^{(j)}\mathbf{\boldsymbol{\Lambda}}^{(j)}\mathbf{R}^{(j)}\mathbf{H}^{(j)}
    = \left[\sum_{n=1}^mq_{kn}r_{nl}\boldsymbol{\Lambda}_{jn}\eta_{jl}\right]_{k,l=1}^m
\end{align}
Thus, each entry of the resulting product of matrix-elements is a linear combination of the entries of the tensor product $\boldsymbol{\Lambda}_j\eta_j^\top$, where $\boldsymbol{\Lambda}_j=[\boldsymbol{\Lambda}_{j1},...,\boldsymbol{\Lambda}_{jm}]$ and $\eta_j=[\eta_{j1},...,\eta_{jm}]$, suggesting that it is some transformed ``view'' of the tensor product determined by $\mathbf{Q}^{(j)}$ and $\mathbf{R}^{(j)}$. Specifically, let $\mathbf{U}_{jl}=\mathbf{Q}^{(j)}\mathrm{diag}(\mathbf{R}^{(j)}_{:,l})$, where $\mathbf{R}^{(j)}_{:,l}$ is the $l$-th column of $\mathbf{R}^{(j)}$. Then Eq.~\ref{eq:elem-bind} defines a map
\begin{align}\label{eq:transform}
    \varphi_j:\boldsymbol{\Lambda}_j\eta_j^\top\mapsto \left[\mathbf{U}_{jl}\mathbf{v}_{jl}\right]_{l=1}^m,
\end{align}
where $\mathbf{v}_{jl}=\boldsymbol{\Lambda}_j\eta_{jl}$. Clearly, provided $\mathbf{R}^{(j)}$ has all non-zero entries, $\varphi_j$ is invertible.

Let $\boldsymbol{\Lambda}=[\boldsymbol{\Lambda}_1,...,\boldsymbol{\Lambda}_D]$ and $\eta=[\eta_1,...,\eta_D]$ be the concatenations of $\boldsymbol{\Lambda}_1,...,\boldsymbol{\Lambda}_D$ and $\eta_1,...,\eta_D$, respectively. Now, taking the entire $\mathbf{H}_1*\mathbf{H}_2$ into account, we can interpret it as a holographic projection of the tensor product $\boldsymbol{\Lambda}\eta^\top$, but instead of just taking the diagonal, we take the block-diagonal where blocks are of size $m$ and subsequently transformed by $\mathbf{Q}^{(j)}$ and $\mathbf{R}^{(j)}$ as defined by $\varphi_j$.

For fixed effective dimension, when $m$ is minimal, i.e. 1, then the block-diagonal is just the diagonal as in FHRR, while, when $m$ is maximal, the block-diagonal is the whole tensor product, i.e. as in Tensor Product Representations \cite{smolenskyTensorProductVariable1990}. 

\begin{figure}
    \centering
    \includegraphics[width=\columnwidth]{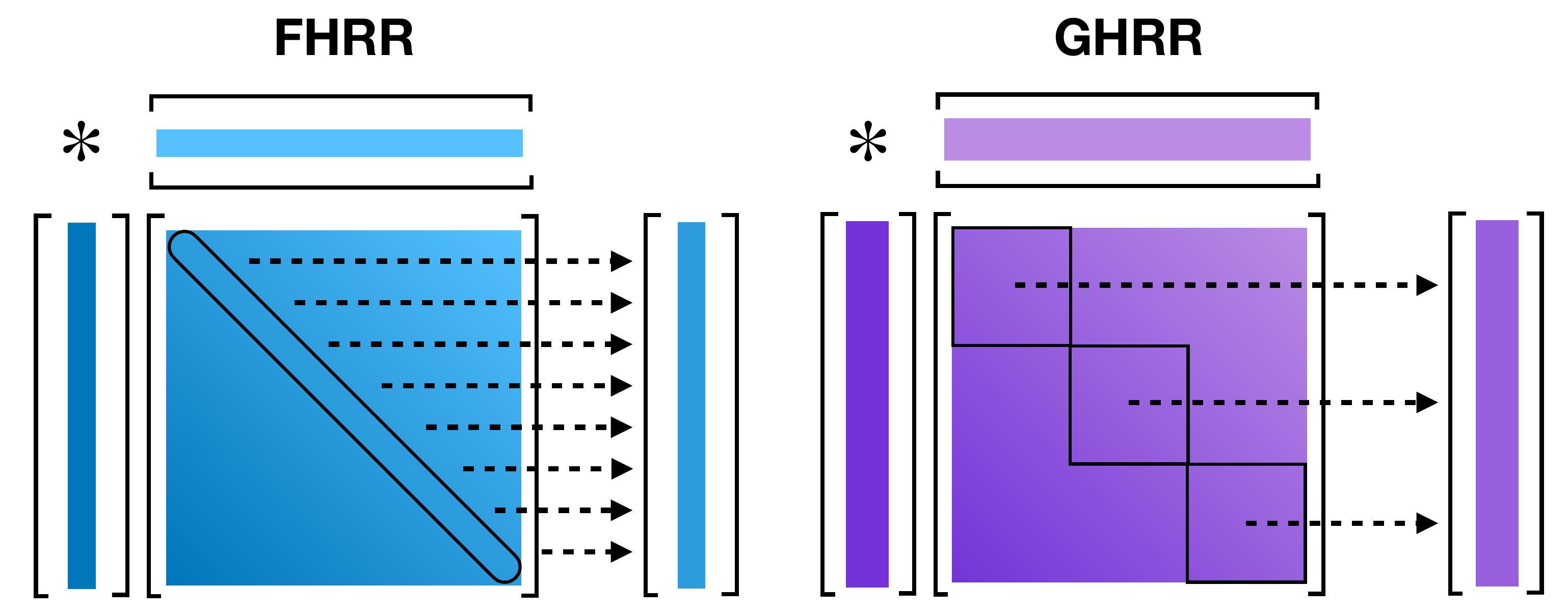}
    \caption{A visualization of binding in FHRR and GHRR. Left: FHRR binding as a projection of the diagonal of the outer product matrix. Right: GHRR binding as a projection of the block-diagonal of the outer product matrix.}
    \label{fig:ghrr-binding}
\end{figure}

From a neural perspective, we can interpret the tensor product between two vectors as representing all possible pairwise connections between the dimensions; i.e. they are fully connected. In contrast, the diagonal projection of a tensor product represents sparse connections where only the corresponding dimensions of the vectors are connected. A block-diagonal projection takes the middle ground, representing connections between nearby dimensions as demarcated by the blocks.

\subsection{Binding implements attention} \label{subsec:binding-attention}

As shown in Section \ref{subsec:binding}, binding in GHRR is the block-diagonal generalization of binding in FHRR. What does this greater level of expressivity afford us? In other HDC frameworks, binding enables the conjunction of representations. As binding in GHRR is more expressive due to matrix multiplication, it can represent more complex operations. In this section, we show that binding in GHRR can implement attention. 

We focus on scaled dot-product attention \cite{vaswaniAttentionAllYou2017}, which can be written as $\mathrm{attn}(\mathbf{Q}, \mathbf{K}, \mathbf{V})=\mathrm{softmax}\left(\frac{1}{\sqrt{d_k}}\mathbf{Q}\mathbf{K}^\top\right)\mathbf{V}$, where $\mathbf{Q}, \mathbf{K}, \mathbf{V}$ are the embeddings of the input features corresponding to query, key, and value respectively, and ${1}/{\sqrt{d_k}}$ is the scaling factor determined by the embedding dimension $d_k$, which we omit for simplicity. Generally, in the standard Transformer architecture, $\mathbf{Q}$, $\mathbf{K}$, and $\mathbf{V}$ are defined in terms of input sequences of token embeddings $\mathbf{X}_q, \mathbf{X}_k, \mathbf{X}_v \in \mathbb{R}^{N \times d_{\text{model}}}$. The terms $\mathbf{W}_q, \mathbf{W}_k, \mathbf{W}_v \in \mathbb{R}^{d_{\text{model}} \times d_k}$ represent learnable linear projection matrices that map the input embeddings into the query, key, and value spaces, respectively. Then, we have:
\begin{align}\label{eq:attn2}
    \text{attn}(\mathbf{Q}, \mathbf{K}, \mathbf{V}) = \mathrm{softmax}(\mathbf{X}_q \mathbf{W}_q \mathbf{W}_k^\top \mathbf{X}_k^\top) \mathbf{X}_v \mathbf{W}_v 
\end{align}

We first show that it is possible to match the mathematical form of the attention mechanism given in Eq.~\ref{eq:attn2} using GHRR. We then show that, by binding specific positional information to GHRR token hypervectors, we can implement token-level attention using GHRR.

\paragraph{Matching mathematical forms between GHRR and attention} To demonstrate the equivalence in GHRR, suppose we have three GHRR hypervectors decomposed as $\mathbf{Q}=\mathbf{W}_q\boldsymbol{\Lambda}_q$, $\mathbf{K}=\mathbf{W}_k\boldsymbol{\Lambda}_k$, and $\mathbf{V}=\mathbf{W}_v\boldsymbol{\Lambda}_v$. Here, the notation $\mathbf{W}\boldsymbol{\Lambda}$ corresponds to the GHRR hypervector decomposition described in Section III and Proposition 1. Specifically:
\begin{itemize}
    \item $\mathbf{W}$ (analogous to the unitary matrix basis components in earlier sections) represents the block-diagonal unitary matrix component. In our framework, this matrix can be fixed (randomly generated) or trainable, serving a role similar to the weight matrices in standard attention by orienting the hypervector in the hyperspace.
    \item $\boldsymbol{\Lambda}$ represents the diagonal matrix of phasors (e.g., $e^{i\theta}$) which encodes the actual feature values or token information via fractional power encoding.
\end{itemize}
Then we can write the attention mechanism in GHRR form as:
\begin{align}\label{eq:ghrr-attn}
    [\sigma(\mathbf{Q}*\mathbf{K}^\dagger)*\mathbf{V}]_j = \sigma(\mathbf{W}_{qj}\boldsymbol{\Lambda}_{qj}\boldsymbol{\Lambda}_{kj}^\dagger \mathbf{W}_{kj}^\dagger)\mathbf{W}_{vj}\boldsymbol{\Lambda}_{vj}
\end{align}
where $\sigma(\cdot) := \text{softmax}(\text{Re}[\cdot])$. The similarity between Eq.~\ref{eq:attn2} and Eq.~\ref{eq:ghrr-attn} suggests that GHRR is capable of implementing attention. The GHRR representation enforces unitarity on $\mathbf{W}_j$ to generate base hypervectors that are norm-preserving; we can relax this constraint for greater flexibility, enabling expressivity comparable to that of a traditional Transformer. The softmax is used to ensure attention weights are normalized; other HDC works have utilized the softmax operation in the context of training, so it can be included as part of the framework \cite{hypermetric,herscheConstrainedFewshotClassincremental2022}.

While traditional attention is applied to a sequence of tokens as encoded by a matrix $\mathbf{X}$, here, this is not necessarily the case. The analog of $\mathbf{X}$, $\boldsymbol{\Lambda}$, is a diagonal matrix that in general encodes only one token. Thus, the attention described in Eq.~\ref{eq:ghrr-attn}, while similar in form to Eq.~\ref{eq:attn2}, does not implement attention over tokens; instead, it applies attention over the representation of a single token. We distinguish this form of \textit{representation-level attention} from traditional \textit{token-level attention} which is explicitly applied only to token representations.

\paragraph{Token-level attention and beyond via VSA positional encoding} To add token-level information, one can express the hypervectors as a sum of token hypervectors bound with hypervectors encoding positional information. To simplify notation, let us consider only one dimension of the hypervector, allowing us to omit and free up the subscript $j$ on $\mathbf{W}$ and $\boldsymbol{\Lambda}$ for GHRR components. More precisely, let $\phi(x)=\mathbf{W}\boldsymbol{\Lambda}(x)$ be the GHRR encoding and $\mathbf{E}_j$ be the positional information for the $j$-th token. In particular, we let $\mathbf{E}_j$ be an $m\times m$ matrix such that $[\mathbf{E}_j]_{jj}=1$ and zero everywhere else. Let $x_1,\dots,x_m$ be the tokens we wish to encode. Then our sequence encoding is
\begin{align}\label{eq:pos-enc}
    \phi(x_1,\dots,x_m)&:=\sum_{j=1}^m \mathbf{E}_j\phi(x_j),
\end{align}
which results in a matrix where exactly the $j$-th row encodes information about the $j$-th token. This construction gives an explicit correspondence between GHRR and attention, with the difference being that GHRR token representations involve an extra random exponential map as determined by $\boldsymbol{\Lambda}$.

In addition, we may let $\mathbf{E}_j$ be some arbitrary trainable matrix $\mathbf{P}_j$, which essentially makes it a learnable position encoding present in each transformer block. In contrast to the sinusoidal positional encoding commonly used in the vanilla Transformer architecture, which is added to the token embeddings prior to the application of any Transformer blocks, this \textit{binding-based positional encoding} is applied via the binding operation, i.e. matrix multiplication in GHRR, and is included in every GHRR Transformer block.


\paragraph{Matching parameters between GHRR and Transformer} 

Given the above discussion, we propose to \textit{treat each component of a GHRR hypervector as an attention head}. In other words, the hyperdimension $D$ in GHRR corresponds to the number of attention heads, and the complexity $m$ determines the maximum number of tokens the GHRR implementation of attention can deal with independently. Note that it's possible to encode more than $m$ tokens at the cost of them entangling within the representation. For simplicity in analysis, we do not consider the entangled case for this work; as a result, the range of attention is limited by $m$. 

Beyond just implementing attention, what this suggests is that binding is more expressive in GHRR compared to other VSAs as binding by itself implements attention in GHRR but not in other VSAs. This creates the possibility of interpreting binding beyond simple association.

\subsection{Time Complexity}

We compare the time complexity of binding FHRR and GHRR hypervectors. For an FHRR hypervector of dimension $D'$, the time complexity of the binding operation is $O(D')$. or an FHRR hypervector of dimension $D\times m\times m$, binding involves multiplying $D$ $m\times m$ matrices, which has time complexity $O(Dm^3)$. To compare the two, we set the dimension of the FHRR hypervector to be equal to the total dimension of the GHRR hypervector, i.e. $D'=Dm^2$, resulting in time complexity $O(Dm^2)$ and $O(Dm^3)$ for FHRR and GHRR hypervectors respectively. Thus, the time complexity of GHRR has an additional factor of $m$, which we trade-off for greater expressivity in the binding operation.


\section{Experiments} \label{sec:Exp}

In this section, we validate the theoretical formulation of GHRR proposed in Sections \ref{sec:ghrr-overview} and \ref{sec:ghrr-implementation} through a series of empirical experiments. Our goal is to demonstrate that the GHRR framework not only adheres to the fundamental properties of HDC but also offers superior expressivity and performance in handling complex data structures. Specifically, we design our experiments to verify the following key properties discussed in the theoretical formulation:

\begin{itemize}
    \item \textbf{Flexible Non-commutativity (Sections \ref{sec:non-commutativty} - \ref{sec:q-effect}):} We empirically validate that GHRR binding is non-commutative and that this property can be modulated by the matrix component $Q$. This supports the claim in Section \ref{sec:ghrr-overview} that GHRR can naturally encode order-dependent and context-sensitive structures without requiring external permutation operations.
    
    \item \textbf{Decoding Accuracy for Compositional Structures (Section \ref{sec:decode}):} We test the model's ability to decode elements from complex nested structures (e.g., trees). 
    
    \item \textbf{Binding Capacity (Section \ref{subsec:capacity}):} We evaluate the storage capacity of bound hypervectors to verify that the increased expressivity of GHRR does not compromise memorization capabilities inherent to HDC.
    
    \item \textbf{Attention Mechanism (Section \ref{subsec:token}):} Finally, we demonstrate the practical utility of the GHRR binding operation by replacing the attention mechanism in a Transformer. This validates the theoretical insight in Section \ref{subsec:binding-attention} that GHRR binding is mathematically expressive enough to approximate attention mechanisms in deep learning.
\end{itemize}

\subsection{A demonstration of non-commutativity}\label{sec:non-commutativty}

To demonstrate the non-commutativity of GHRR, we use a simple dictionary as an example. A dictionary associates key-value pairs $\{(k_i, v_i)\}_{i=1}^n$. The corresponding hypervector that encodes the entire dictionary is $\mathbf{H}=\sum_{i=1}^n \mathbf{K}_i * \mathbf{V}_i$, where we bold the keys and values to indicate that they are hypervectors. We require that $\mathbf{K}_i,\mathbf{K}_j$ for $i\neq j$ to be quasi-orthogonal. To retrieve $\mathbf{V}_j$, we compute $\mathbf{K}_j^{-1}*\mathbf{H} = \sum_{i=1}^n \mathbf{K}_j^{-1}\mathbf{K}_i * \mathbf{V}_i
    = \mathbf{V}_j + \mathrm{noise}$.

Figure \ref{fig:nested-structure} compares the decoded hypervectors in a nested dictionary for commutative (FHRR) and non-commutative (GHRR) encodings. The dictionary is encoded via $\mathbf{H}=\mathbf{K}_1*(\mathbf{K}_1*\mathbf{V}_1+\mathbf{K}_2*\mathbf{V}_2)+\mathbf{K}_2*(\mathbf{K}_1*\mathbf{V}_3+\mathbf{K}_2*\mathbf{V}_4)$, values are decoded by $\mathbf{V}_1'=\mathbf{K}_1^{-1}*\mathbf{K}_1^{-1}*\mathbf{H}$, $\mathbf{V}_2'=\mathbf{K}_2^{-1}*\mathbf{K}_1^{-1}*\mathbf{H}$, etc. We observe that the commutative encoding is confused for values with equivalent keys up to permutation, while the non-commutative encoding does not have this issue. This confusion is evidenced by the overlapping lines for the decoded $V_2'$ and $V_3'$ in Figure \ref{fig:nested-structure}B. Meanwhile, these two lines are distinct in Figure \ref{fig:nested-structure}C.

\begin{figure*}
    \centering
    \includegraphics[width=0.85\textwidth]{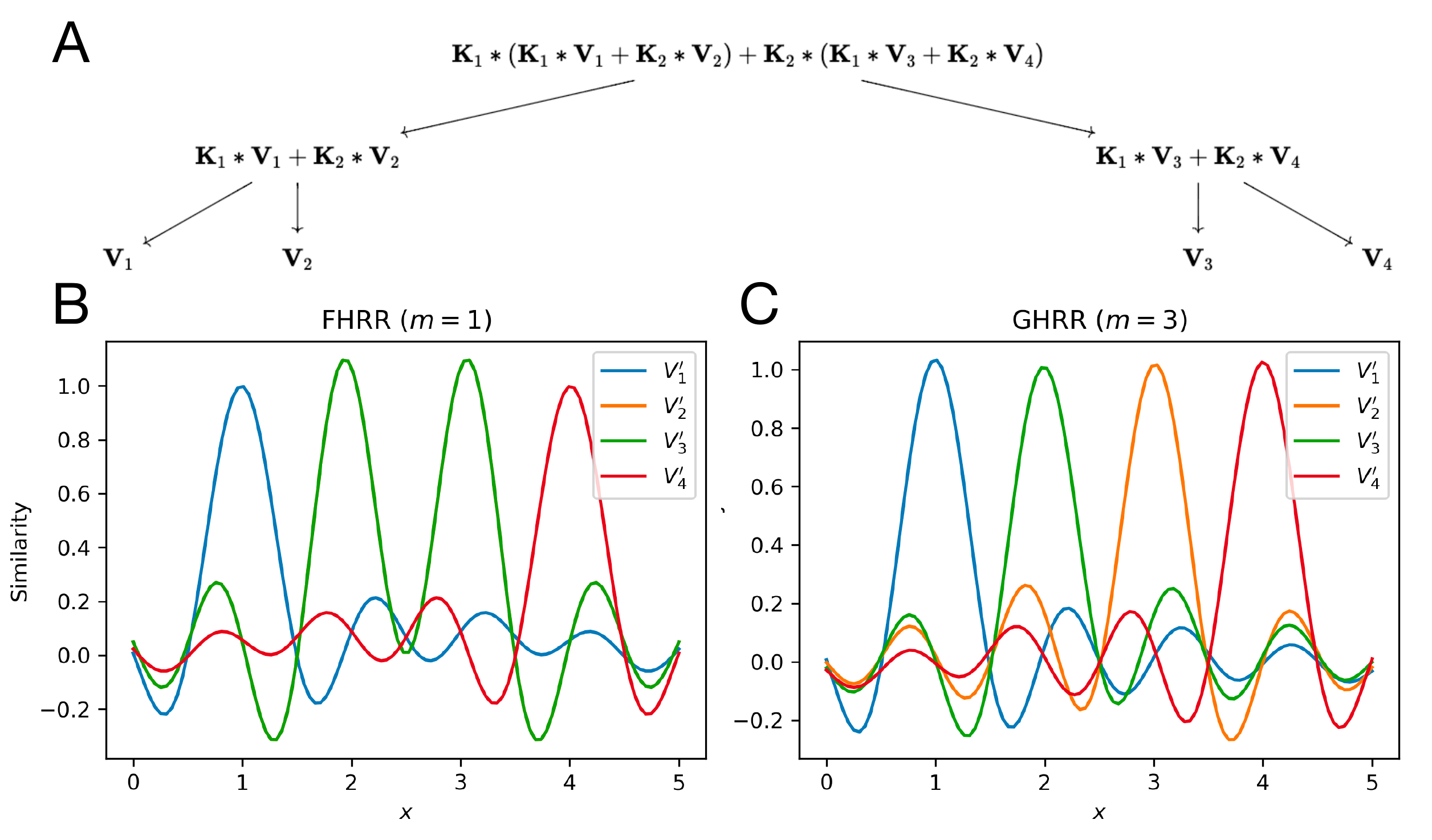}
    \caption{We encode a hypervector $\mathbf{H}=\mathbf{K}_1*(\mathbf{K}_1*\mathbf{V}_1+\mathbf{K}_2*\mathbf{V}_2)+\mathbf{K}_2*(\mathbf{K}_1*\mathbf{V}_3+\mathbf{K}_2*\mathbf{V}_4)$, and retrieve approximate hypervectors $\mathbf{V}_1'=\mathbf{K}_1^{-1}*\mathbf{K}_1^{-1}*\mathbf{H}$, $\mathbf{V}_2'=\mathbf{K}_2^{-1}*\mathbf{K}_1^{-1}*\mathbf{H}$, etc. We then plot the similarities of the retrieved hypervectors against $\phi(x)$. A decoding is successful if there is a peak at the given value and nowhere else. \textbf{A.} A visualization of the encoded structure. \textbf{B.} We use an FHRR encoding, i.e. a commutative encoding. \textbf{C.} We use a GHRR encoding with $m=3$, i.e. a non-commutative encoding.}
    \label{fig:nested-structure}
\end{figure*}

\subsection{Effect of Q on commutativity}\label{sec:q-effect}
To measure the effect of $\mathbf{Q}$ on the commutativity of GHRR representations, we sample random GHRR hypervectors $\mathbf{H}_1,\mathbf{H}_2$ and compute the similarity between $\mathbf{H}_1*\mathbf{H}_2$ and $\mathbf{H}_2*\mathbf{H}_1$. We call the similarity $\delta(\mathbf{H}_1*\mathbf{H}_2,\mathbf{H}_2*\mathbf{H}_1)$ the \textit{degree of commutativity} of the representation. We define the \textit{diagonality} of a matrix $\mathbf{Q}$ to be $\sum_{j} |\mathbf{Q}_{jj}|/\sum_{j}\sum_{k} |\mathbf{Q}_{jk}|$. Let $\mathbf{Q}_1,\mathbf{Q}_2$ correspond to $\mathbf{H}_1,\mathbf{H}_2$ respectively. 

Figure \ref{fig:diagonality} plots the sum of the diagonality of $\mathbf{Q}_1$ and $\mathbf{Q}_2$ against the degree of commutativity of $\mathbf{H}_1$ and $\mathbf{H}_2$. Each point in the figure corresponds to a pair of randomly sampled hypervectors $\mathbf{H}_1,\mathbf{H}_2$ with corresponding matrices $\mathbf{Q}_1,\mathbf{Q}_2$. Each $\mathbf{Q}^{(j)}$ is randomly sampled by first sampling a matrix $\mathbf{X}\in\mathbb{C}^{m\times m}$, where each component is of the form $a+bi$ with $a,b\sim N(0,1)$. We then define $\mathbf{H}(\mathbf{X})=(\mathbf{X}+\mathbf{X}^\dagger)/2$ and $\mathbf{Q}_j(X)=\exp(i\mathbf{H}(\mathbf{X}))$. We perform gradient descent on $\mathbf{X}$ such that $\mathbf{Q}_j(\mathbf{X})$ as diagonality close to the desired value.

We observe that the two quantities are strongly correlated. Moreover, for larger $m$, there is a high probability of sampling a matrix with low diagonality and thus low degree of commutativity.

\begin{figure}
    \centering
    \includegraphics[width=\columnwidth]{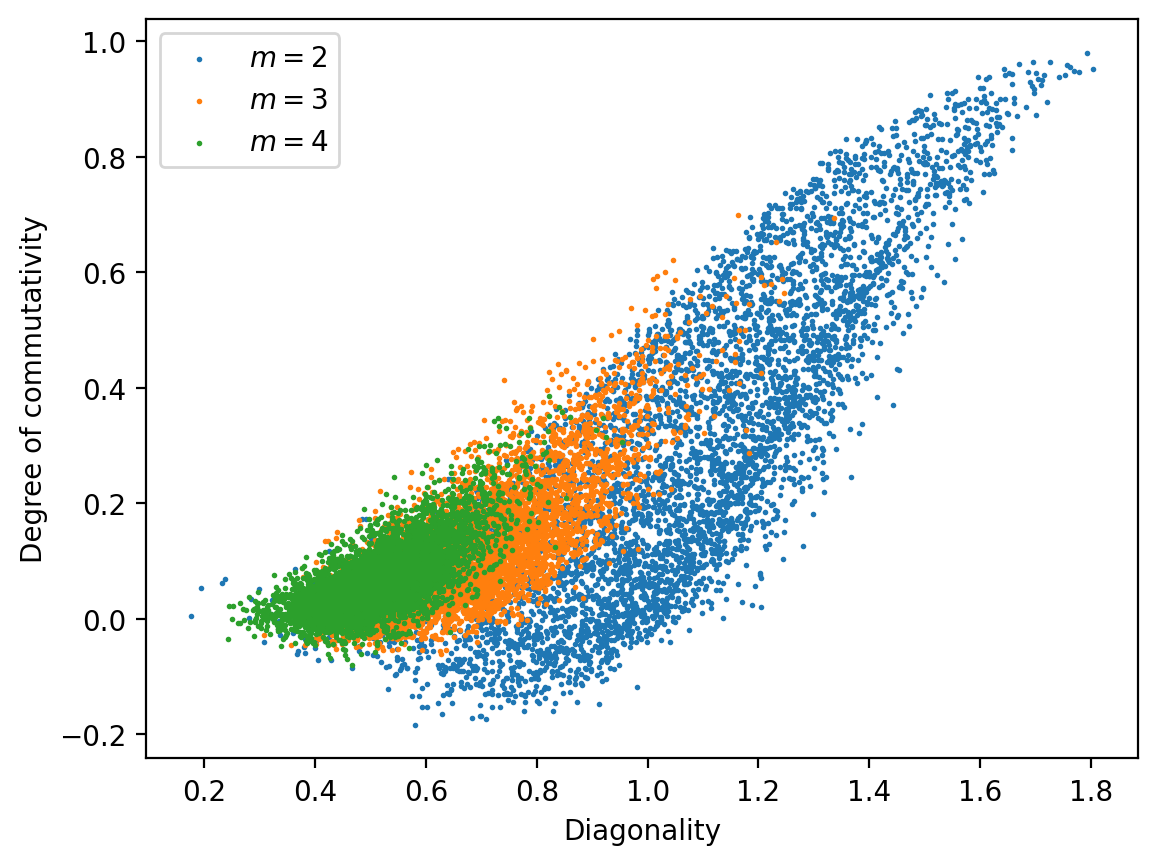}
    \caption{Plot of the sum of the diagonality of $\mathbf{Q}_1$ and $\mathbf{Q}_2$ against the degree of commutativity of $H_1$ and $H_2$.}
    \label{fig:diagonality}
\end{figure}

\subsection{Decoding accuracy for nested structures}\label{sec:decode}
To investigate the effect of $m$, we use a similar setup to the one Section \ref{sec:non-commutativty} but vary the depth of the encoded dictionary. A value is decoded successfully if it has the highest similarity to the decoded hypervector compared to all other values. We plot the decoding accuracy over different depths and values of $m$ in Figure \ref{fig:decode-acc}A, using a total dimension $Dm^2$ of 600. We observe that the larger $m$ is, the longer the high decoding accuracy is sustained for increasing tree depths. However, the overall decoding accuracy also drops faster for larger $m$ after a given tree depth.

In Figure \ref{fig:decode-acc}B, we use the same procedure as in Figure \ref{fig:decode-acc}A, but apply the permutation operation $\rho$ to each subtree encoding. Given a hypervector of dimension $D$, the permutation operation is typically implemented by rotating its vector entries; e.g. the first element becomes the last, while the 2nd to $D$-th elements are shifted upwards. An example of a permutation-based representation of a tree would be $\mathbf{H}=\mathbf{K}_1*\rho(\mathbf{K}_1*\rho\mathbf{V}_1+\mathbf{K}_2*\rho\mathbf{V}_2)+\mathbf{K}_2*\rho(\mathbf{K}_1*\rho\mathbf{V}_3+\mathbf{K}_2*\rho\mathbf{V}_4)$.

Interestingly, we observe that the decoding accuracy is sustained at 100\% for longer as we increase the depth of the tree, but decreases sharply after a certain threshold. This is in contrast to Figure \ref{fig:decode-acc}A, where the decoding accuracy drops below 100\% earlier, but decays much more gracefully for larger $m$. The results suggest that using GHRR to encode data structures can provide a simpler implementation (e.g. without permutation) whose decoding accuracy degrades gracefully as the size of the data structure saturates the representation.

\begin{figure*}
    \centering
    \includegraphics[width=0.7\textwidth]{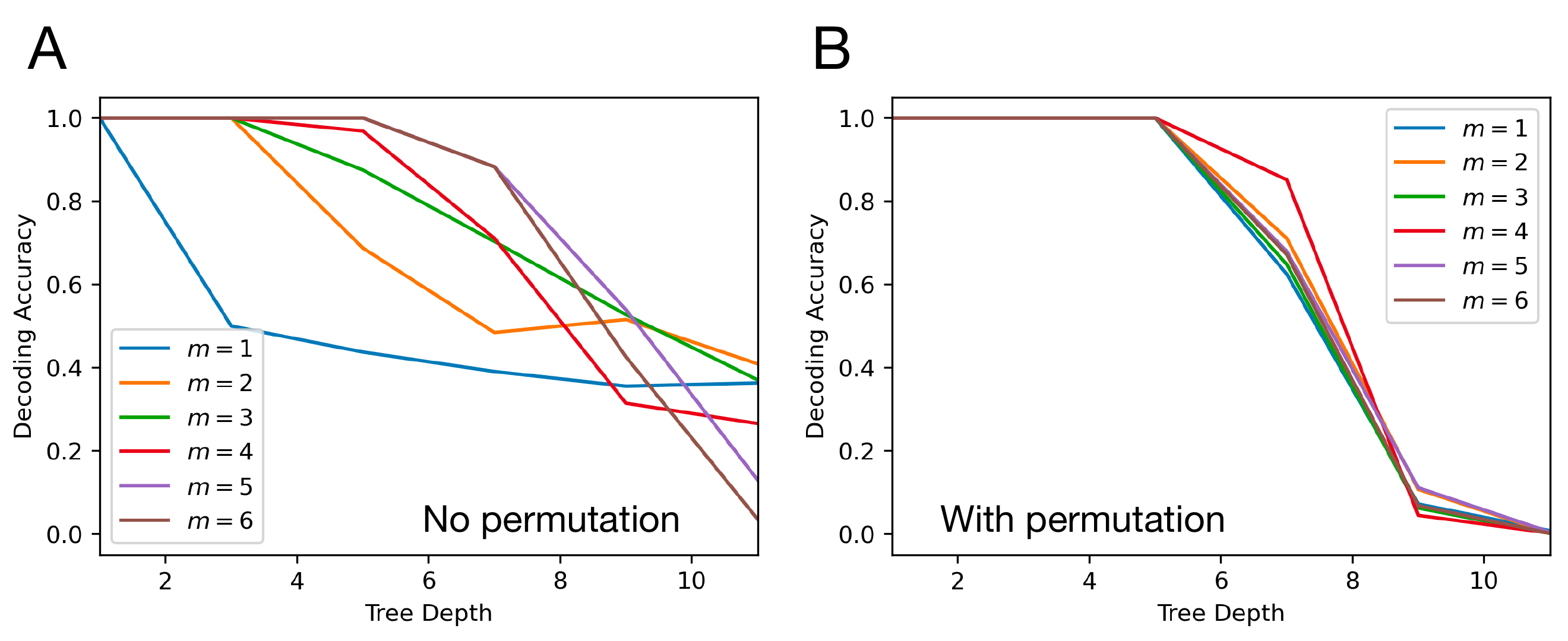}
    \caption{Plot of decoding accuracy for trees of different depths. \textbf{A.} We encode a dictionary as in Figure \ref{fig:nested-structure} but with varying depth and GHRR parameter $m$. We use a total dimension of 600 and plot the resulting decoding accuracy. \textbf{B.} Same encoding as in (A) but with permutation applied to each subtree encoding.}
    \label{fig:decode-acc}
\end{figure*}

We hypothesize that this is due to the exclusivity of the permutation operation; i.e. permutation maps a hypervector to a quasi-orthogonal hypervector, making the decoding quality better. However, this also leads to faster saturation of the tree encoding, leading to a sharp drop in decoding accuracy. On the other hand, the GHRR encoding by itself has a more adjustable range of exclusivity, which, as we will see, is correlated to diagonality, leading to the hypervector having greater memorization capacity \cite{fradyTheorySequenceIndexing2018} albeit for hypervectors that are quasi-orthogonal to a lesser degree.

In addition, we investigate the effect of diagonality on decoding accuracy. Figure \ref{fig:decode-diag} plots the decoding accuracy for different tree depths at different levels of diagonality and different values of $m$. \ref{fig:decode-diag}A, \ref{fig:decode-diag}B, \ref{fig:decode-diag}C, and \ref{fig:decode-diag}D correspond to diagonalities of $0$, $1/3$, $2/3$, and $1$, respectively. We observe that modulating diagonalities from $0$ to $1$ enables us to interpolate between decoding performance similar to that of an FHRR encoding and an FHRR encoding with permutation. Indeed, with a diagonality of $1$, our GHRR encoding becomes commutative, making it functionally equivalent to an FHRR encoding. On the other hand, one can think of a permutation matrix as having diagonality zero (i.e. maximally non-commutative); thus we may expect similar degrees of commutativity for other matrices of diagonality zero. \textit{This result suggests that one way to interpret $\mathbf{Q}$ is as a flexible version of the permutation operation that modulates the level of commutativity of the representation.}

This observation also provides an explanation for the trends seen in Figure \ref{fig:decode-acc}A where larger $m$ leads to a decoding accuracy more similar to that of the permutation encoding. For larger $m$, as suggested by Figure \ref{fig:diagonality}, we are more likely to sample a matrix with low diagonality, which in turn makes the encoding more non-commutative just like permutation encoding.

\begin{figure*}
    \centering
    \includegraphics[width=0.7\textwidth]{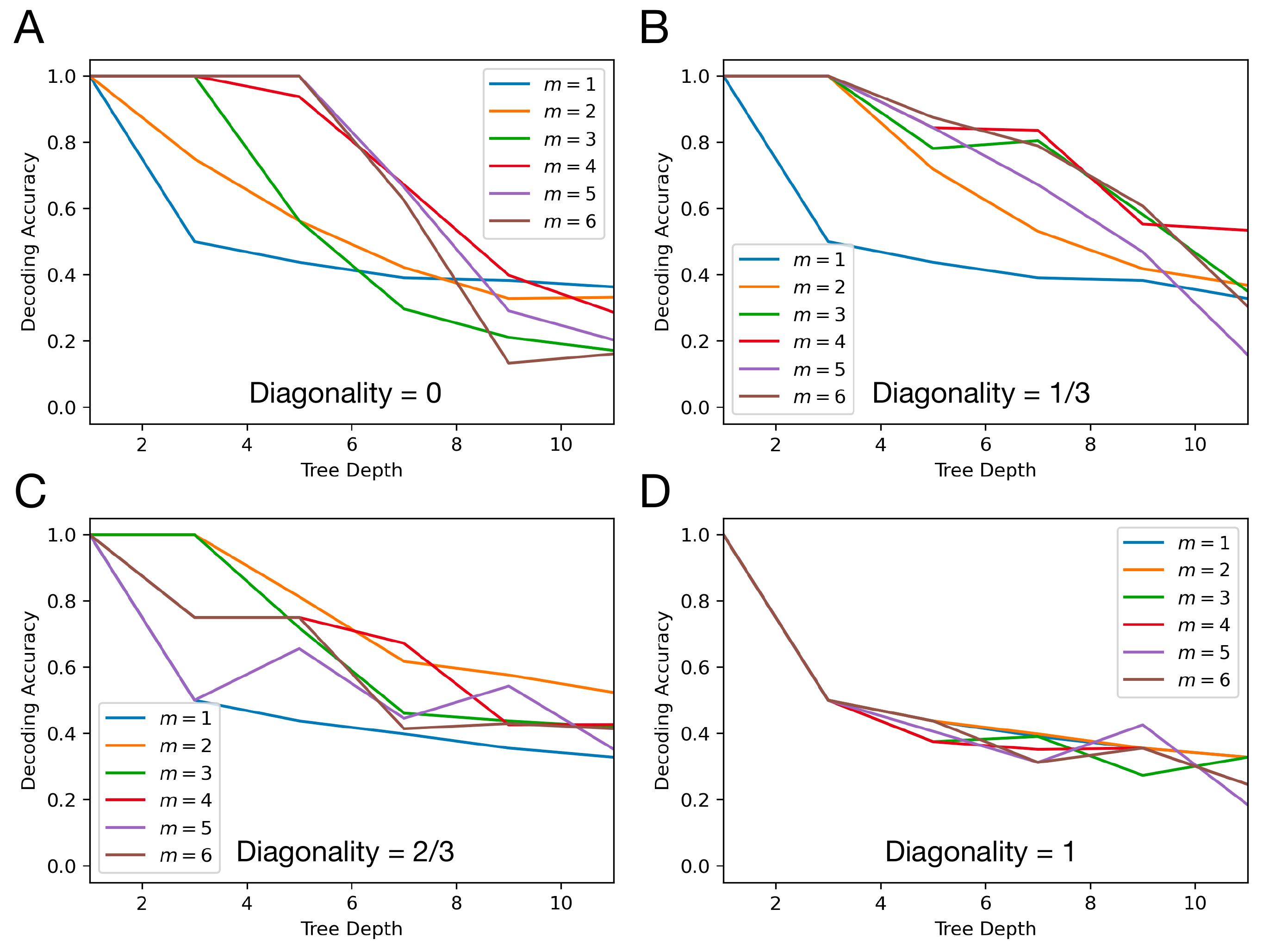}
    \caption{Plot of decoding accuracy for trees of different depths. We encode a dictionary as in Figure \ref{fig:nested-structure} but with varying depth and GHRR parameter $m$. We use a total dimension $Dm^2$ of 600 and plot the resulting decoding accuracy. \textbf{A.} GHRR encoding with diagonality $0$. \textbf{B.} GHRR encoding with diagonality $1/3$. \textbf{C.} GHRR encoding with diagonality $2/3$. \textbf{D.} GHRR encoding with diagonality $1$.}
    \label{fig:decode-diag}
\end{figure*}

\subsection{Capacity of GHRR representations} \label{subsec:capacity}

Capacity refers to the number of hypervectors one can bundle while still being able to memorize it accurately; it measures how efficiently HDC can store (bundle) information into one representation~\cite{fradyTheorySequenceIndexing2018,thomasTheoreticalPerspectiveHyperdimensional2022, rahimi2017high, schlegelComparisonVectorSymbolic2022}. We first investigate the bundling capacity of GHRR base hypervectors following the standard experiment in~\cite{schlegelComparisonVectorSymbolic2022}. We generate 1000 random GHRR hypervectors and randomly select $k$ of them to create a bundle $\mathbf{X}$. We select the top-$k$ hypervectors most similar to $\mathbf{X}$ and compute the decoding accuracy as the proportion of correctly decoded hypervectors within the bundle. We repeat the experiment 10 times and report the mean in Figure \ref{fig:heatmap}. In the figure, we consider multiple variants of GHRR as a result of varying $m=1,...,4$ and choosing varying or fixed $\mathbf{Q}$. Here, $\mathbf{\boldsymbol{\Lambda}}=\mathrm{diag}(e^{i\theta_1},...,e^{i\theta_m})$ where $\theta_j\sim\mathrm{Unif}(0,2\pi)$. We observe negligible differences between the decoding accuracies of different variants, suggesting that the capacity of GHRR is comparable to that of FHRR (GHRR $m=1$).

\begin{figure}
    \centering
    \includegraphics[width=\columnwidth]{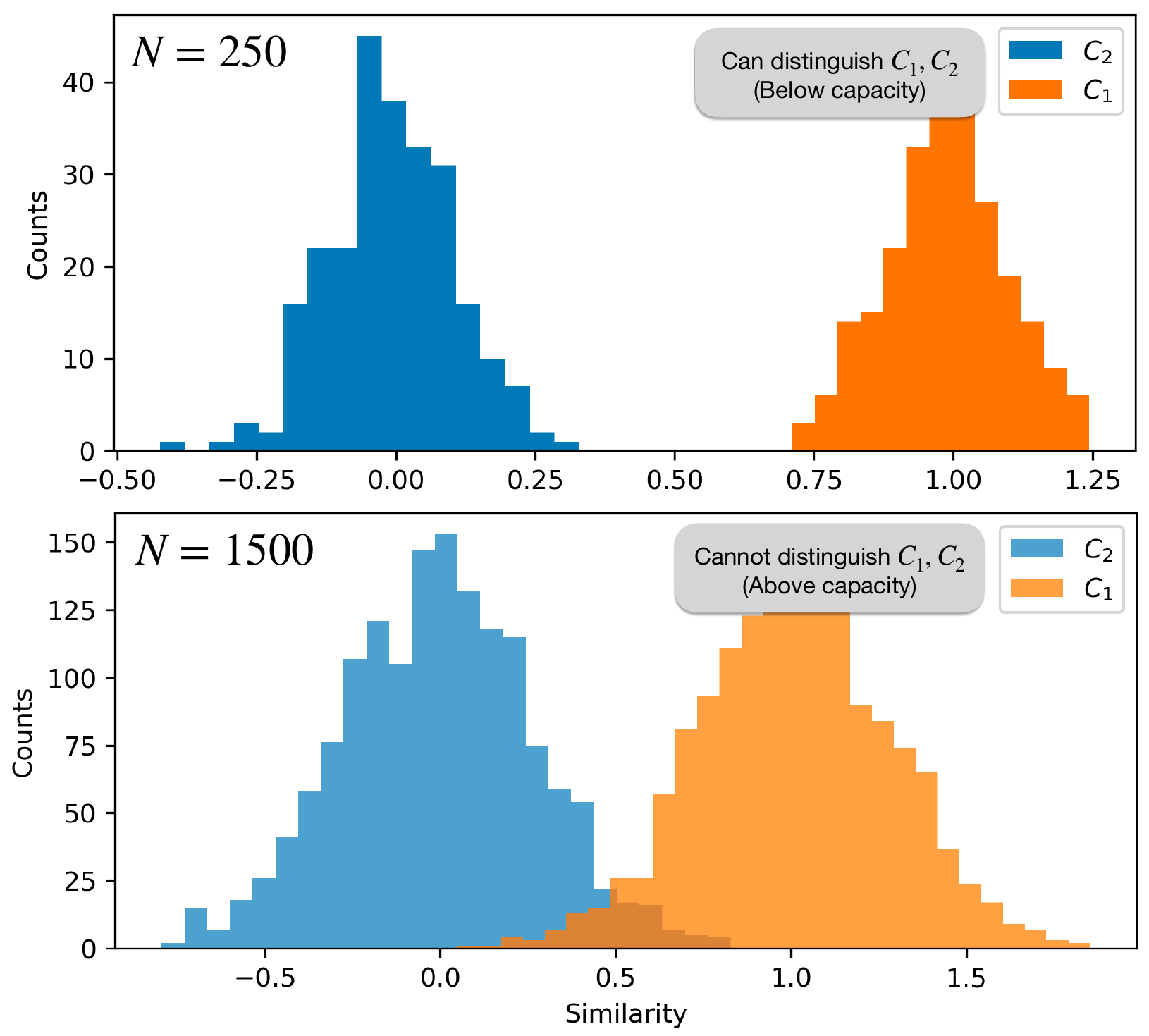}
    \caption{Histogram of similarities between $x$ and $C_1$ and $C_2$ for all $x\in X$. Above: $C_1$ and $C_2$ are a bundles of 250 hypervectors. $\delta(x,C_1)>\delta(x,C_2)$ for all $x\in X$, so the histograms do not overlap. Below: $C_1$ and $C_2$ are a bundles of 1500 hypervectors. There exists $x\in C_1$ such that $\delta(x,C_1)\leq \delta(x,C_2)$, so the histograms overlap.}
    \label{fig:cap-vis}
\end{figure}

\begin{figure}
    \centering
    \includegraphics[width=\columnwidth]{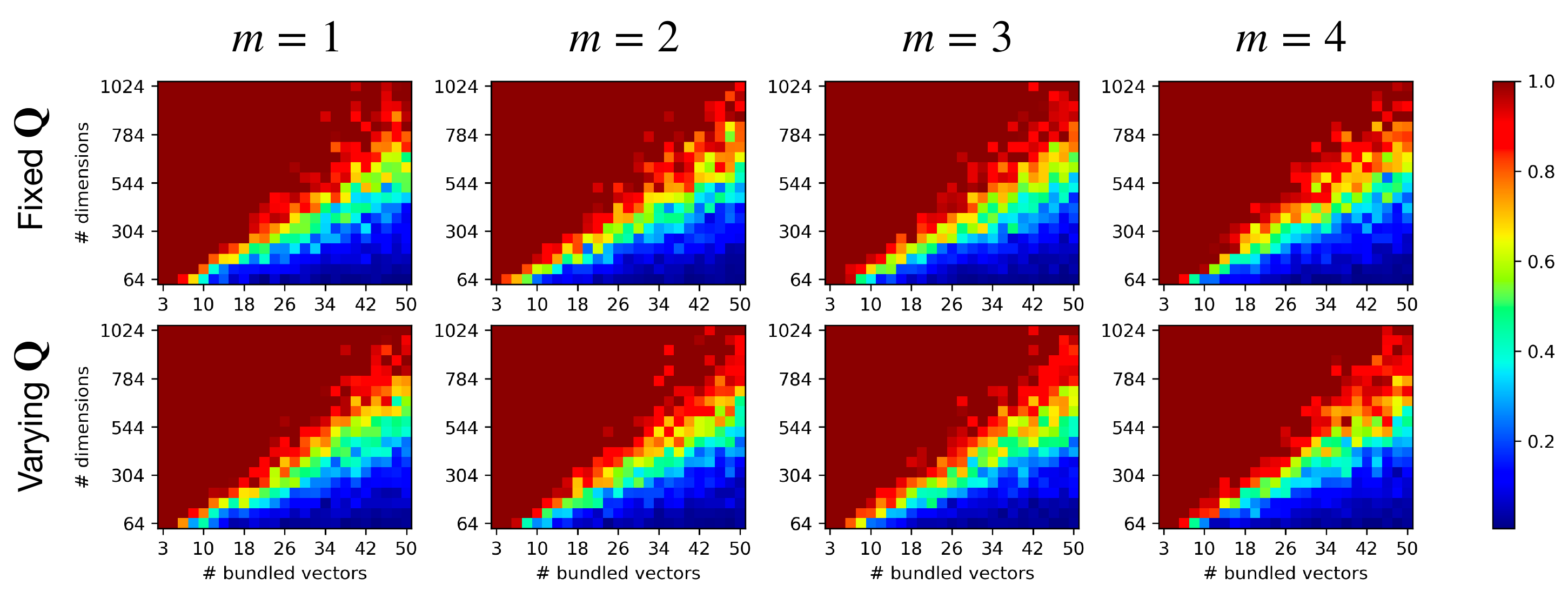}
    \caption{Heatmaps showing the decoding accuracies of different variants of GHRR as we vary total dimension $Dm^2$ and number of bundled vectors $k$.}
    \label{fig:heatmap}
\end{figure}

While bundling capacity is a standard measure used in evaluating HDC implementations, binding is also an inseparable part of the framework: many HDC models operate on composite hypervectors that are the results of the bundling of bound hypervectors~\cite{ osipov2022hyperseed, poduval2022graphd, zou2022biohd}. Taking this factor into account, we are interested in evaluating the effect of binding on capacity for GHRR. 
We design the following experiment to provide a standard comparison across GHRR of different complexities and bound components (how many hypervectors are bound together before bundling the product with the others). 
We consider the case where the set $S$ consists of bound hypervectors whose component base hypervectors are taken from a finite set $U$. Specifically, let $n$ be the number of components in the bound hypervector (e.g. $\mathbf{H}_1*\mathbf{H}_2*\mathbf{H}_3$ has 3 components). $U$ consists of $\sqrt[n]{15000}$ randomly generated quasi-orthogonal base hypervectors. Elements in $S$ are sampled uniformly randomly without replacement from all possible strings of length $n$ with alphabet $U$. We measure capacity by starting with a set $S$ of hypervectors, partitioning it into two sets $X,X'$, and forming bundled hypervectors $\mathbf{C}_1=\sum_{x\in X}x$ and $\mathbf{C}_2=\sum_{x\in X'}x$. We say $X$ is memorized if $\delta(\mathbf{H},\mathbf{C}_1)>\delta(\mathbf{H},\mathbf{C}_2)$ for all $\mathbf{H}\in X$. The capacity is the largest $|X|=:N$ for which this property holds. This can also be done indirectly by plotting the decoding accuracy for various values of $|X|=k$. Figure \ref{fig:cap-vis} visualizes the intuition behind capacity.

Figure \ref{fig:capacity} consists of plots of how capacity changes as total dimension $Dm^2$ increases for different parameters $m$ and the number of bound components. In Figure \ref{fig:capacity}A, permutations of bound hypervectors are considered distinct, while in Figure \ref{fig:capacity}B, permutations are considered the same and are thus removed.

In Figure \ref{fig:capacity}A, for memorizing the usual base hypervectors (i.e. one bound component hypervectors), there is virtually no difference in capacity between different $m$. Capacity increases approximately linearly with respect to the total dimension $Dm^2$. For a larger number of bound components, there is a greater distinction between the capacity for different values of $m$. In particular, $m=1$ (FHRR) is significantly worse. Interestingly, we find that for fixed $m>1$, the difference in capacity for different numbers of bound components is not large. Meanwhile, in Figure \ref{fig:capacity}B, there is a negligible difference between the capacity for different parameters $m$; for all $m$, the capacity scales linearly with respect to the total dimension. Taken together, this suggests that the reason FHRR ($m=1$) performs significantly worse in \ref{fig:capacity}A is due to its inability to distinguish permutations of bound hypervectors. Conversely, increasing $m$ makes the GHRR representation non-commutative and thus able to distinguish between the permutations. At the same time, there is a negligible loss in capacity when we increase $m$ while keeping the total dimension $Dm^2$ fixed. \textit{These results suggest that GHRR provides a flexible non-commutative representation with no loss in capacity compared to FHRR.}

Table \ref{tab:cap} provides a quantitative measure of the capacity of bound hypervectors. Here, hypervectors have a total dimension of $Dm^2=900$ and the number of possible bound hypervector combinations is kept to be approximately 15000. Permutations are not considered distinct. We report the mean and standard deviation of the capacities computed over 20 trials. 

{
In practice, the block size $m$ controls a trade-off between expressivity and model complexity. Larger $m$ makes the binding operation more expressive and more strongly non-commutative, which helps decoding deeper or more structured compositions, but also increases the number of parameters and the per-layer computation. Smaller $m$ yields cheaper, more FHRR-like encodings with weaker order sensitivity. Within the range of $m$ we study, we did not observe clear evidence that increasing $m$ alone causes overfitting or loss of robustness, so $m$ can be treated as a tuning knob to balance decoding performance against model size and compute budget.}

\begin{figure}
    \centering
    \includegraphics[width=\columnwidth]{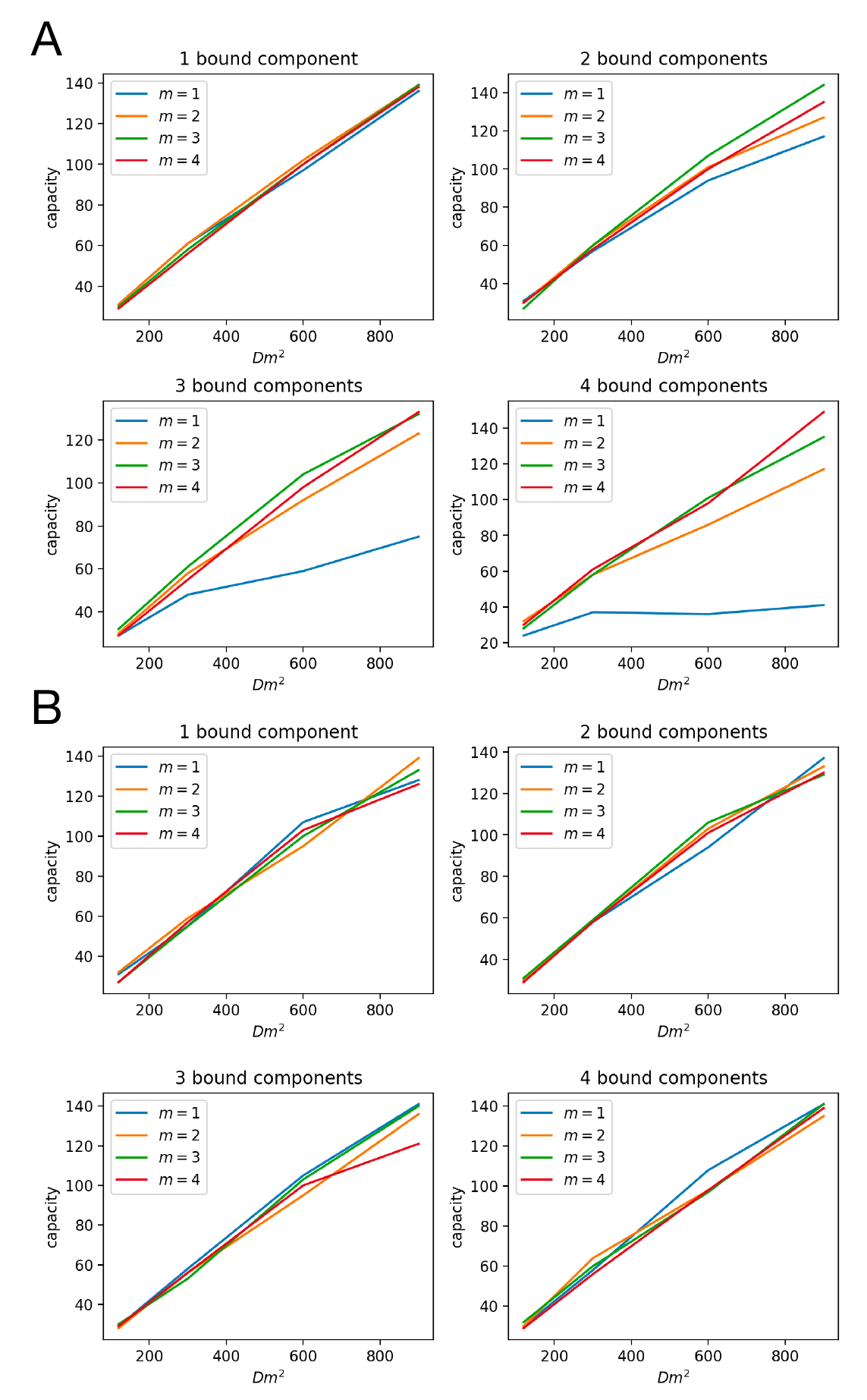}
    \caption{The capacity of bound GHRR representations with varying total dimension $Dm^2$ and the number of bound components. \textbf{A.} Plots where permutations are considered distinct; e.g. $\mathbf{A}*\mathbf{B}\neq \mathbf{B}*\mathbf{A}$ \textbf{B.} Plots where permutations are considered the same (e.g. $\mathbf{A}*\mathbf{B}= \mathbf{B}*\mathbf{A}$) and are thus removed.}
    \label{fig:capacity}
\end{figure}


\begin{table}
\centering
\begin{tabular}{ccccc}
\toprule
Components & $m=1$              & $m=2$              & $m=3$              & $m=4$              \\ \midrule
$1$                                    & $272 \pm 34$ & $279 \pm 52$ & $278 \pm 45$ & $277 \pm 39$ \\
$2$                                    & $234 \pm 34$ & $255 \pm 64$ & $289 \pm 35$ & $271 \pm 31$ \\
$3$                                    & $151 \pm 33$ & $247 \pm 37$ & $265 \pm 52$ & $266 \pm 47$ \\
$4$                                    & $83 \pm 20$  & $234 \pm 35$ & $270 \pm 50$ & $299 \pm 61$ \\ \bottomrule
\end{tabular}
\caption{Capacity of bound hypervectors}
\label{tab:cap}
\end{table}

\subsection{Next Token Prediction} \label{subsec:token}
We replace the attention mechanism of transformers with the GHRR equivalent as described above and test the model on a language modeling task.

Let $x_1,\dots,x_n$ be a sequence. For notational simplicity, assume $D=1$, so all hypervectors are simply $m\times m$ matrices (though note that we use $D>1$ in experiments). Moreover, we assume $m=n$. Let $\phi_q,\phi_k,\phi_v$ be query, key, and value encoders respectively. We define the hypervectors $\mathbf{Q}=\sum_{j=1}^n \mathbf{P}_j^q * \phi_q(x_j)$, $\mathbf{K}=\sum_{j=1}^n \mathbf{P}_j^k * \phi_k(x_j)$, and $\mathbf{V}=\sum_{j=1}^n \mathbf{P}_j^v * \phi_v(x_j)$, where $\mathbf{P}_j^q,\mathbf{P}_j^k,\mathbf{P}_j^v$ for $j=1,\dots,n$ are positional encoding (PE) matrices. With $D=1$, binding reduces to matrix multiplication.

We evaluate our model on a next-token prediction language modeling task on the Wikitext2 \cite{wikitext2} and the Penn Treebank \cite{ptb} datasets. 

The PE can be the same or different across the $\mathbf{Q}, \mathbf{K}, \mathbf{V}$ hypervectors as well as across attention heads. Moreover, the PEs can either be trainable or fixed (i.e. randomly initialized). This gives us eight different variants of the GHRR Transformer model. For each GHRR encoder $\phi_q,\phi_k,\phi_v$, we make $W$ trainable and keep $\boldsymbol{\Lambda}$ fixed. Sample positional encodings are visualized in Supplementary Materials B. 

Both the baseline Transformer model \cite{vaswaniAttentionAllYou2017} and our model have a comparable number of weight parameters, with a slight increase when trainable PE is included. Training details are described in Supplementary Materials B. We report the mean perplexity (PPL) and standard deviation over five independent runs in Table \ref{tab:sequence}. { ``All same" uses a single PE shared across $Q, K$, and $V$ and across all attention heads; ``QKV" uses distinct PEs for $Q, K$, and $V$ but shares them across heads; ``heads" shares PEs across heads $Q, K$, and $V$; ``All different" uses distinct PEs for $Q, K$, and $V$ and for each head. For each configuration, we report the mean and standard deviation over five runs.}

We observe an average performance improvement of 5.47\% on WikiText-2 and 2.75\% on the PTB dataset when compared to the baseline Transformer model. In the cases with the highest observed improvements, the performance increased by 5.53\% on WikiText-2 and 3.44\% on PTB, respectively. Specifically, we found a 2.56\% performance improvement in PTB when PEs are varied across $\mathbf{Q}, \mathbf{K}, \mathbf{V}$ matrices, attention heads, or both, compared to when they were not. 

This suggests that the inclusion of PEs help with model performance, though there needs to be some kind of variation between PEs to have sufficient expressive power. Moreover, there is negligible difference between fixed and trainable positional encodings, suggesting that level of expressive power is not needed for this task.

\begin{table}
    \centering
    \begin{tabular}{cccc}
        \toprule
        Model & Trainable PE & Wikitext2 \cite{wikitext2} & Penn Treebank \cite{ptb} \\ \midrule
        Baseline & N/A & 29.16 $\pm$ 0.13 & 94.78 $\pm$ 0.23 \\ \midrule
        All same & No & \textbf{27.54 $\pm$ 0.08} & 93.93 $\pm$ 0.09 \\
        QKV & No & 27.6 $\pm$ 0.10 & 91.63 $\pm$ 0.04 \\
        Head & No & 27.55 $\pm$ 0.06 & 91.68 $\pm$ 0.11 \\
        All different & No & 27.56 $\pm$ 0.09 & 91.58 $\pm$ 0.08 \\ \midrule
        All same & Yes & \textbf{27.54 $\pm$ 0.08} & 93.98 $\pm$ 0.12 \\
        QKV & Yes & 27.57 $\pm$ 0.12 & \textbf{91.51 $\pm$ 0.08} \\
        Head & Yes & 27.56 $\pm$ 0.07 & 91.53 $\pm$ 0.09 \\
        All different & Yes & 27.56 $\pm$ 0.05 & 91.52 $\pm$ 0.05 \\ \bottomrule
    \end{tabular}
    \caption{Perplexity of trained language models with different positional encoding (PE) sharing strategies.}
    \label{tab:sequence}
\end{table}

\section{Discussion}
In Section \ref{sec:enc-data}, we outlined several variations of GHRR with increasing complexity, controlled by whether (1) $\mathbf{Q}$ varies over the dimensions of the GHRR hypervector; and (2) whether $\mathbf{Q}$ depends on the input $\mathbf{x}$. We subsequently explored the simplest variation where $\mathbf{Q}$ is constant and briefly discussed the kernel properties for an encoding with varying $\mathbf{Q}$ over the dimensions. In this section, we discuss the other variations and their possibilities for future development.

\subsection{Varying Q over dimensions}
Varying $\mathbf{Q}$ over the dimensions will not substantially affect the shape of the kernel. As we sample $\mathbf{Q}$ and $\mathbf{\boldsymbol{\Lambda}}$ independently of each other, expectations can be factored as in Eq.~\ref{eq:tr-exp-val}, which suggests that it is enough to know the expected value of $\mathbf{Q}$ to know the shape of the kernel. However, as noted previously, in an encoding scheme that is purely randomly sampled, having varying $\mathbf{Q}$ provides stable behavior as when $D$ increases, the behavior converges to the mean. In addition, having varying $\mathbf{Q}$ over dimensions, as suggested by Eq.~\ref{eq:elem-bind}, places different emphases on the conjunction of hypervectors when binding them together. It is not clear how this affects the characteristics of binding, but it would be an interesting line of investigation.

\subsection{Input-dependent Q}
As highlighted in Eq.~\ref{eq:weighted-kernel}, $\mathbf{Q}$ can be thought of as a re-weighting of the kernels corresponding to the elements of the diagonal in $\mathbf{\boldsymbol{\Lambda}}$. Thus, by having $\mathbf{Q}$ depend on the input, we can modulate how important each of the kernels is depending on the context for different pairs of inputs. For example, suppose the kernels $K_k$ in the sum correspond to Gaussian kernels with different length scales. Then, for inputs $\mathbf{x},\mathbf{y}\in \mathcal{X}$, it is possible to learn a map $\mathbf{Q}:\mathcal{X}\to\mathcal{H}$ that places emphasis on different kernels depending on the diagonal of the matrix $\mathbf{Q}(\mathbf{x})\mathbf{Q}(\mathbf{y})^\dagger$, which enables us to compare items at different scales depending on context.

Taken together, the above two parameters enable increased complexity and flexibility of GHRR, affecting both the characteristics of binding and the shape and adaptivity of the kernel. It remains to be seen, however, what particular implementations of the map $\mathbf{Q}:\mathcal{X}\to\mathcal{H}$ would work best for various purposes.

{
\subsection{Extensions to multimodal and control tasks}
The kernel view of GHRR suggests several natural extensions beyond the tasks considered in this work. First, multimodal inputs can be encoded by assigning different $\Lambda$-distributions and Q-components to different modalities (e.g., vision, language) and bundling them into a joint GHRR hypervector. The resulting similarity function is then a mixture of modality-specific kernels, enabling flexible cross-modal interactions while maintaining fixed-width representations. Second, continuous control states and actions can be encoded via fractional power encoding, with GHRR binding used to represent state--action pairs and their compositions. This would allow policies or value functions to be expressed in a hyperdimensional space with holographic structure, potentially benefiting from the robustness and compositionality of GHRR. A detailed investigation of these multimodal and control applications is beyond the scope of the present paper and is left for future work.
}

{
    \subsection{Alternative non-commutative HDC implementations}
    There exist other models that have non-commutative binding, including HRR with circular convolution~\cite{plate1995holographic}, VTB~\cite{gosmann2019vector}, MBAT~\cite{gallant2013representing}, and SMR~\cite{kelly2013encoding, kleykoSurveyHyperdimensionalComputing2023}. Many of them share similar considerations, but their solution typically breaks more ``rules" compared to our methods. In particular, the binding operators in HRR and VTB are not associative. Although there is an interesting argument to justify non-associativity in terms of modeling hierarchy~\cite{gosmann2019vector}, we did not consider this for our design as we do not think it is a useful feature in terms of integration into neural networks. MBAT has a slightly different view in binding: it uses matrices as keys and performs binding with matrix-vector multiplication of key and value followed by bundling the key-value pairs. Their binding thus consists of two steps - matrix-vector multiplication and bundling - which is relatively uncommon. SMR uses one large square matrix instead of a list of square matrices as in GHRR to represent data, which limits its scalability.
}

\section{Conclusion}
In this work, we introduced the GHRR framework, an extension of FHRR, and provided a particular implementation of the framework. We proved that GHRR maintains the theoretical properties of traditional HDC representations and provide empirical demonstrations of quasi-orthogonality. We explored the kernel and binding properties of GHRR and provided an interpretation of binding in GHRR as an interpolation between binding in FHRR and in Tensor Product Representations. We performed empirical experiments on GHRR, demonstrating its flexible non-commutativity, increased decoding accuracy for compositional structures, and improved memorization capacity for bound hypervectors compared to FHRR.

We also demonstrate that binding in GHRR is more expressive than that in other HDC variants; in particular, we show that binding in GHRR can implement a kind of attention mechanism. We verify this by replacing the attention mechanism in a transformer with its GHRR-equivalent and testing it on a language modeling task, showing improved performance compared to a vanilla transformer.

\section{Author's Contributions}
Calvin Yeung and Zhuowen Zou conceptualized the project, developed the theory, and drafted the manuscript. Calvin Yeung, SungHeon Jeong, Wenjun Huang conducted the experiments, performed the analysis, and data visualization. Nathaniel Bastian and Mohsen Imani provided guidance and feedback. All authors reviewed the manuscript.

\section{Acknowledgements}
This work was supported in part by DARPA Young Faculty Award, National Science Foundation \#2127780, \#2319198,  \#2321840 and \#2312517, Semiconductor Research Corporation (SRC), Office of Naval Research Young Investigator Program Award, grants \#N00014-21-1-2225 and \#N00014-22-1-2067, the Air Force Office of Scientific Research under award \#FA9550-22-1-0253, the U.S. Military Academy under Cooperative Agreement No. W911NF-24-2-0200, generous gifts from Xilinx and Cisco. The views and conclusions expressed in this paper are those of the authors and do not reflect the official policy or position of the U.S. Military Academy, U.S. Army, U.S. Department of War, or U.S. Government.

\bibliographystyle{ieeetr}
\bibliography{references}

\begin{IEEEbiography}[{\includegraphics[width=1in,height=1.0in,clip,keepaspectratio]{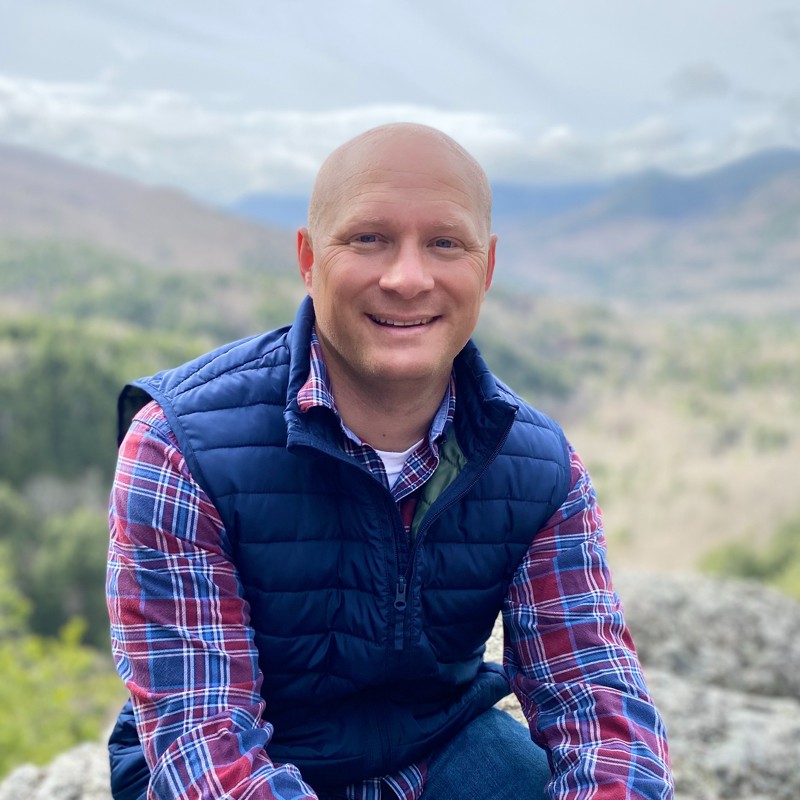}}]{Nathaniel D. Bastian} 
(Senior Member, IEEE) received the Ph.D. degree in Industrial Engineering and Operations Research from Pennsylvania State University, University Park, PA, in 2016. He is currently an Assistant Professor within the Department of Electrical Engineering \& Computer Science at the United States Military Academy at West Point, and he serves as Deputy Director of the Robotics Research Center and Principal Investigator of the Laboratory for Artificial Intelligence Research \& Engineering (LAIRE). His primary research interests combine mathematical optimization, decision theory, machine learning, and statistical computing to design and develop secure, robust, and resilient neurosymbolic AI-enabled systems. He has received \$8M+ in research funding support from DARPA, NSA, OUSW, DEVCOM, AFRL, ONR, and more.
\end{IEEEbiography}

\end{document}